\documentclass[sigconf]{acmart}

\usepackage[utf8]{inputenc} 
\usepackage[T1]{fontenc}    
\usepackage{hyperref}       
\usepackage{url}            
\usepackage{booktabs}       
\usepackage{amsfonts}       
\usepackage{nicefrac}       
\usepackage{microtype}      
\usepackage{xcolor}         
\usepackage{graphicx}
\usepackage{amsmath}
\usepackage{amsthm}
\usepackage{booktabs}
\usepackage{algorithm}
\usepackage{algpseudocode}
\usepackage{multirow}
\usepackage{subfigure}
\usepackage{listings}
\usepackage{threeparttable}
\usepackage{wrapfig}
\usepackage{diagbox}
\usepackage{pifont}

\newtheorem{lemma}{Lemma}

\newtheorem{proposition}{Proposition}

\newtheorem{remark}{Remark}


\newcommand{\cmark}{\ding{51}}%
\newcommand{\xmark}{\ding{55}}%


\definecolor{pink}{rgb}{0.858, 0.188, 0.478}
\definecolor{commentcolor}{RGB}{110,154,155}   
\newcommand{\PyComment}[1]{\ttfamily\textcolor{commentcolor}{\# #1}}  
\newcommand{\PyCode}[1]{\ttfamily\textcolor{black}{#1}} 

\newcommand{\nosection}[1]{\vspace{2pt}\noindent\textbf{#1\ }}
\newcommand{\code}[0]{\url{https://github.com/EdisonLeeeee/MaskGAE}}

\AtBeginDocument{%
  \providecommand\BibTeX{{%
    \normalfont B\kern-0.5em{\scshape i\kern-0.25em b}\kern-0.8em\TeX}}}

\copyrightyear{2023}
\acmYear{2023}
\setcopyright{acmlicensed}
\acmConference[KDD '23] {Proceedings of the 29th ACM SIGKDD Conference on Knowledge Discovery and Data Mining}{August 6--10, 2023}{Long Beach, CA, USA.}
\acmBooktitle{Proceedings of the 29th ACM SIGKDD Conference on Knowledge Discovery and Data Mining (KDD '23), August 6--10, 2023, Long Beach, CA, USA}
\acmPrice{15.00}
\acmISBN{979-8-4007-0103-0/23/08}
\acmDOI{10.1145/3580305.3599546}

\settopmatter{printacmref=true}
\begin{document}

\title{What's Behind the Mask: Understanding Masked Graph Modeling for Graph Autoencoders}

\author{Jintang Li}
\authornote{Both authors contributed equally to this research.}
\affiliation{\institution{Sun Yat-sen University}
    \country{}}
\email{lijt55@mail2.sysu.edu.cn}

\author{Ruofan Wu}
\authornotemark[1]
\affiliation{\institution{Ant Group}
    \country{}}
\email{ruofan.wrf@antgroup.com}

\author{Wangbin Sun}
\affiliation{\institution{Sun Yat-sen University}
    \country{}}
\email{sunwb7@mail2.sysu.edu.cn}

\author{Liang Chen}
\affiliation{\institution{Sun Yat-sen University}
    \country{}}
\email{chenliang6@mail.sysu.edu.cn}

\author{Sheng Tian}
\affiliation{\institution{Ant Group}
    \country{}}
\email{tiansheng.ts@antgroup.com}

\author{Liang Zhu}
\affiliation{\institution{Ant Group}
    \country{}}
\email{tailiang.zl@antgroup.com}

\author{Changhua Meng}
\affiliation{\institution{Ant Group}
    \country{}}
\email{changhua.mch@antgroup.com}

\author{Zibin Zheng}
\affiliation{\institution{Sun Yat-sen University}
    \country{}}
\email{zhzibin@mail.sysu.edu.cn}

\author{Weiqiang Wang}
\affiliation{\institution{Ant Group}
    \country{}}
\email{weiqiang.wwq@antgroup.com}

\renewcommand{\shortauthors}{Li and Wu, et al.}

\begin{abstract}
    The last years have witnessed the emergence of a promising self-supervised learning strategy, referred to as masked autoencoding. However, there is a lack of theoretical understanding of how masking matters on graph autoencoders (GAEs). In this work, we present masked graph autoencoder (MaskGAE), a self-supervised learning framework for graph-structured data. Different from standard GAEs, MaskGAE adopts masked graph modeling (MGM) as a principled pretext task - masking a portion of edges and attempting to reconstruct the missing part with partially visible, unmasked graph structure. To understand whether MGM can help GAEs learn better representations, we provide both theoretical and empirical evidence to comprehensively justify the benefits of this pretext task. Theoretically, we establish close connections between GAEs and contrastive learning, showing that MGM significantly improves the self-supervised learning scheme of GAEs. Empirically, we conduct extensive experiments on a variety of graph benchmarks, demonstrating the superiority of MaskGAE over several state-of-the-arts on both link prediction and node classification tasks.\footnote{Code is made publicly available at \code}
\end{abstract}

\begin{CCSXML}
    <ccs2012>
    <concept>
    <concept_id>10010147.10010257.10010293.10010319</concept_id>
    <concept_desc>Computing methodologies~Learning latent representations</concept_desc>
    <concept_significance>500</concept_significance>
    </concept>
    <concept>
    <concept_id>10010147.10010257.10010258.10010260</concept_id>
    <concept_desc>Computing methodologies~Unsupervised learning</concept_desc>
    <concept_significance>500</concept_significance>
    </concept>
    </ccs2012>
\end{CCSXML}

\ccsdesc[500]{Computing methodologies~Learning latent representations}
\ccsdesc[500]{Computing methodologies~Unsupervised learning}

\keywords{Graph Neural Networks; Graph Representation learning; Graph Self-supervised Learning; Masked Graph Autoencoders}

\maketitle

\section{Introduction}

Self-supervised learning, which learns broadly useful representations from unlabeled data in a task-agnostic way, has emerged as a popular and empirically successful learning paradigm for graph neural networks (GNNs)~\cite{wu2021self}. The key insight behind self-supervised learning is to obtain supervisory signals from the data itself with different handcrafted auxiliary tasks (so-called \emph{pretext tasks}). The past few years have witnessed the success of graph self-supervised learning in a wide range of graph-related fields~\cite{velickovic2019deep,chen2021understanding,STEP}, particularly chemical and biomedical science~\cite{rong2020self,DBLP:conf/ijcai/ZhaoLHLZ21}, where label annotation can be very costly or even impossible to acquire.

For years, there are several research efforts in an attempt to exploit vast unlabeled data for self-supervised learning. Among contemporary approaches, contrast learning~\cite{velickovic2019deep,graphcl,grace} is one of the most widespread self-supervised learning paradigms on graph data. It learns representations that are invariant to different augmentation views of graphs, achieving remarkable success in various graph representation learning tasks~\cite{wu2021self}. Despite being effective and prevalent, graph contrastive methods highly rely on specialized and complex pretext tasks for self-supervised learning~\cite{mvgrl}, with data augmentation being crucial for contrasting different structural views of graphs~\cite{YouCSW21,lee2021augmentation}.

There is another prominent line of research attempts to learn representations through generative perspectives, with graph autoencoders as representative examples. Graph autoencoders (GAEs) are a family of self-supervised learning models that take the graph input itself as self-supervision and learn to reconstruct the graph structure~\cite{kipf2016variational,PanHLJYZ18_arge}.
Compared to contrastive methods, GAEs are generally very simple to implement and easy to combine with existing frameworks, since they naturally leverage graph reconstruction as pretext tasks without need of augmentations for view generations~\cite{wu2021self}.
However, literature has shown that GAEs following such a simple graph-reconstruction principle might over-emphasize proximity information that is not always beneficial for self-supervised learning~\cite{velickovic2019deep,mvgrl,grace}, making it less applicable to other challenging tasks beyond link prediction. Therefore, there is a need for designing better pretext tasks for GAEs.

\begin{figure}
    \centering
    \includegraphics[width=\linewidth]{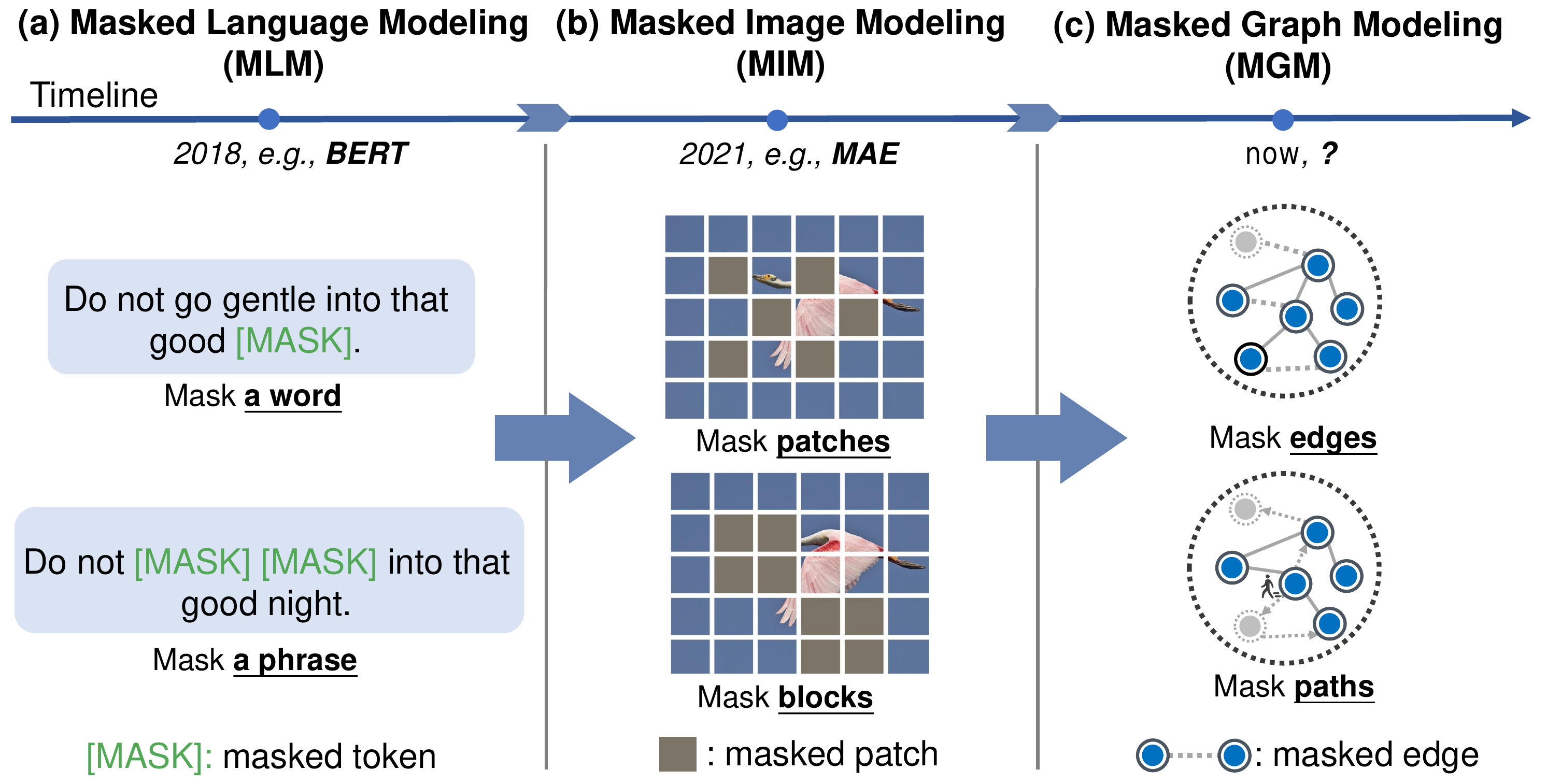}
    \caption{\textbf{From left to right}: illustrative examples of \textbf{(a)} masked language modeling (MLM), \textbf{(b)} masked image modeling (MIM), and \textbf{(c)} masked graph modeling (MGM) paradigms with different masking strategies. Similar to MLM and MIM, the goal of MGM is to learn representations by predicting randomly masked edges based on visible structure.}
    \label{fig:comparison}
    \vspace{-5mm}
\end{figure}

Masked autoencoding, whose goal is to reconstruct masked signals from unmasked input under the autoencoder framework, has recently advanced the state-of-the-art and provided valuable insights in both language and vision research. As shown in Figure~\hyperref[fig:comparison]{1(a)} and~\hyperref[fig:comparison]{1(b)},
masked language modeling (MLM) and masked image modeling (MIM) have been widely applied to text and image data, with prominent examples including BERT~\cite{bert} and MAE~\cite{mae}, respectively. Despite the fact that masked autoencoding has shown promise in benefiting visual and language representation learning, the applicability on graph data has yet to be substantially investigated in this discipline. Actually, masked autoencoding should also be a good fit for graph data, since each edge can easily be masked or unmasked as self-supervisions. In light of this, a natural question arises: \emph{whether masked autoencoding, or masked graph modeling, would advance GAEs in self-supervised learning as well?}

\nosection{Present work.} In this work, we seek to continue the success of MLM and MIM by introducing masked graph modeling (MGM) as a principled pretext task for graph-structured data.
As shown in Figure~\hyperref[fig:comparison]{1(c)}, the core idea behind MGM is to remove a portion of the input graph and learn to predict the removed content such as edges. Following this philosophy, we propose masked graph autoencoder (MaskGAE), a self-supervised learning framework that leverages the idea of masking and predicting through the node and edge-level reconstruction.
Our framework is theoretically grounded by explicitly relating GAEs to contrastive learning and demonstrating the benefits of MGM in improving the self-supervised learning scheme of GAEs. Specifically, we reveal that the learning objective of GAEs is equivalent to contrastive learning, in which paired subgraphs naturally form two structural views for contrasting. Most importantly, masking on an edge can reduce redundancy for two contrastive subgraph views in GAEs and thus benefit contrastive learning.

This paper offers the following main contributions:
\begin{itemize}
    \item MaskGAE, a simple yet effective self-supervised learning framework for graphs.
    \item A comprehensive theoretical analysis of the proposed framework along with guidance on the design of pretext training tasks (i.e., MGM) for GAEs.
    \item A new form of structured masking strategy to facilitate the MGM task, where edges in a contiguous region are masked together.
    \item An in-depth experimental evaluation of the proposed framework, demonstrating the effectiveness of MaskGAE on both link prediction and node classification tasks.
\end{itemize}

We believe that our work is a step forward in the development of simple and provably powerful graph self-supervised learning frameworks, and hope that it would inspire both theoretical and practical future research.

\section{Related Work}

\paragraph{Graph contrastive methods.}
Contrastive methods follow the principle of mutual information maximization~\cite{infomax}, which typically works to maximize the correspondence between the representations of an instance (e.g., node, subgraph, or graph) in its different augmentation views. Essentially, good data augmentations and pretext tasks are the prerequisites for contrastive learning. There is a vast majority of research on graph contrastive learning~\cite{velickovic2019deep,graphcl,mvgrl}. To our best knowledge, graph contrastive methods are currently the most successful approaches to learning useful and expressive representations in a self-supervised fashion. However, one limitation shared by all these successful approaches is that they highly rely on the design of pretext tasks and augmentations techniques (usually summarized from many trial-and-error) to provide useful self-supervision for learning better representations~\cite{YouCSW21}.

\paragraph{Graph autoencoders.}
GAEs in the form proposed by~\cite{kipf2016variational} are a family of generative models that map (encode) nodes to low-dimensional representations and reconstruct (decode) the graph. Following the autoencoding philosophy, latest approaches have demonstrated their efficacy in modeling node relationships and learning robust representations from a graph~\cite{PanHLJYZ18_arge}. Although GAEs progressed earlier than graph contrastive approaches, they remained out of the mainstream for a long time. This is mainly due to the known limitations that GAEs suffer from.
As revealed in literature~\cite{velickovic2019deep,mvgrl}, GAEs tend to over-emphasize proximity information at the expense of structural information, leading to relatively poor performance on downstream tasks beyond link prediction. Despite the capability of GAEs being largely limited by the pretext task, there has been little attention paid to a better design of pretext tasks for improving the self-supervised learning scheme of GAEs.

\paragraph{Masked autoencoding.}
Masked autoencoding is one such learning task: masking a portion of input signals and attempting to predict the contents that are hidden by the mask~\cite{mae}. Masked language modeling (MLM)~\cite{bert} is the first successful application of masked autoencoding in natural language processing. Typically, MLM is a fill-in-the-blank self-supervised learning task, where a model learns representations by predicting what a masked word should be with the context words surrounding the token.
Recently, masked image modeling (MIM)~\cite{mae,simmim} follows a similar principle to learn representations by predicting the missing parts at the pixel or patch level. MIM is gaining renewed interest from both industries and academia, leading to new state-of-the-art performance on broad downstream tasks. In a recent exploration \cite{zhang2022mask}, the authors discovered an implicit connection between MIM and contrastive learning.
Despite the popularity in language and vision research, the techniques of masked autoencoding are relatively less explored in the graph domain.
Until very recently, there were a few trials attempting to bridge this gap. In work concurrent with the present paper, MGAE~\cite{mgae} and GraphMAE~\cite{graphmae} seek to apply this idea directly to graph data as a self-supervised learning paradigm, by performing masking strategies on graph structure and node attributes, respectively. However, they present only empirical results with experimental trial-and-error, which lacks further theoretical justification for a better understanding of the potential benefits of masked graph modeling.

\section{Problem Formulation and Preliminaries}
\nosection{Problem formulation.}
Let $\mathcal{G}=(\mathcal{V}, \mathcal{E})$ be an undirected and unweighted graph, where $\mathcal{V}=\{v_i\}$ is a set of nodes and $\mathcal{E} \subseteq \mathcal{V} \times \mathcal{V}$ is the corresponding edges.
Optionally, each node $v\in\mathcal{V}$ is associated with a $d$-dimensional feature vector $x_v \in \mathbb{R}^d$.
The goal of most graph self-supervised learning methods, including ours, is to learn a graph encoder $f_\theta$, which maps between the space of graph $\mathcal{G}$ and their low-dimensional latent representations $\mathbf{Z}=\{z_i\}^{|\mathcal{V}|}_{i=1}$, such that $f_\theta (\mathcal{G})=\mathbf{Z} \in \mathbb{R}^{|\mathcal{V}| \times d_h}$ best describes each node in $\mathcal{G}$, where $d_h$ is the embedding dimension.

\nosection{Masked graph modeling (MGM).}
Following the masked autoencoding principle, we introduce MGM as a pretext task for graph self-supervised learning in this paper. Similar to MLM and MIM tasks in language and vision research, MGM aims to assist the model to learn more useful, transferable, and generalized representations from unlabeled graph data through masking and predicting. This self-supervised pre-training strategy is particularly scalable when applied to GAEs since only the unmasked graph structure is processed by the network.

\section{Theoretical Justification and Motivation}
\label{sec:theory}
\subsection{Revisiting graph autoencoders}
GAEs, which leverage naturally occurring pairs of similar and dissimilar nodes in a graph as self-supervised signals, have shown the advantage of learning graph structures and node representations. GAEs adopt the classic encoder-decoder framework, which aims to decode from the low-dimensional representations that encode the graph by optimizing the following binary cross-entropy loss:
\begin{equation}
    \label{eq:gae}
    \begin{aligned}
         & \mathcal{L}^+ = \frac{1}{|\mathcal{E}^+|}\sum_{(u, v)\in \mathcal{E}^+}\log h_\omega(z_u, z_v),                                     \\
         & \mathcal{L}^- =  \frac{1}{|\mathcal{E}^-|}\sum_{(u^\prime, v^\prime)\in \mathcal{E}^-}\log(1-h_\omega(z_{u^\prime}, z_{v^\prime})), \\
         & \mathcal{L}_\text{GAEs} = - \left(\mathcal{L}^++\mathcal{L}^-\right)
    \end{aligned}
\end{equation}
where $z$ is the node representation obtained from an encoder $f_\theta$ (e.g., a GNN); $\mathcal{E}^+$ is a set of positive edges while $\mathcal{E}^-$ is a set of negative edges sampled from graph; Typically, $\mathcal{E}^+=\mathcal{E}$. We denote $h_\omega$ a decoder with parameters $\omega$.

\subsection{Connecting GAEs to contrastive learning}
We now provide some intuition that connects GAEs to contrastive learning from the viewpoint of information theory.
Our analysis will be based on the \textbf{homophily} assumption, i.e., the underlying semantics of nodes $u$ and $v$ are more likely to be the same if they are connected by an edge. We adopt the \emph{information-maximization (infomax)} viewpoint of contrastive learning \cite{tian2020what, tsai2021selfsupervised}. Specifically, let $I(X; Y)$ be the \emph{mutual information (MI)} between random variables $X$ and $Y$ taking values in $\mathcal{X}$ and $\mathcal{Y}$, respectively. An important alternative characterization of MI is the following Donsker-Varadhan variational representation \cite{polyanskiy2014lecture}:
\begin{equation}\label{eq:dv}
    \begin{aligned}
         & I(X; Y)              = \sup_{c: \mathcal{X} \times \mathcal{Y} \mapsto \mathbb{R}}\mathcal{I}_c(X; Y),                  \\
         & \mathcal{I}_c(X; Y)  = \mathbb{E}_{x, y \sim P_{XY}} c(x, y) - \log \mathbb{E}_{x, y \sim P_X \times P_Y}(e^{c(x, y)}),
    \end{aligned}
\end{equation}
with the \emph{critic} function $c$ \cite{pmlr-v97-poole19a} ranging over the set of integrable functions taking two arguments. Under the context of GAEs, we identify $X$ and $Y$ with the corresponding $k$-hop subgraphs $\mathcal{G}^k(u)$ and $\mathcal{G}^k(v)$ of \textbf{adjacent} nodes $u$ and $v$, with randomness over the generating distribution of the graph as well as the generating distributions of node features.\footnote{The negative sampling process in the objective \eqref{eq:gae} is biased if negative pairs are sampled from disconnected node pairs. Since it does not affect our analysis, we defer the discussion to Appendix~\ref{appendix:discussions}.}
Denote the corresponding joint and marginal distributions as $P_{UV}$, $P_U$, and $P_V$, respectively. Under this formulation, we may view Eq.\eqref{eq:gae} as an empirical approximation of the following population-based objective:
\begin{equation}\label{eq:gae_pop}
    \begin{aligned}
         & \mathcal{I}^+_h(U;V) = \mathbb{E}_{u, v \sim P_{UV}}\log h(u, v),                              \\
         & \mathcal{I}^-_h(U;V) = \mathbb{E}_{u^\prime \sim P_U, v^\prime \sim P_V}\log(1-h(u, v)),       \\
         & \mathcal{I}^{\text{GAEs}}_h(U;V) = - \left( \mathcal{I}^+_h(U;V)+\mathcal{I}^-_h(U;V) \right).
    \end{aligned}
\end{equation}
With slight abuse of notation, we denote $I(U; V)$ as the MI between $\mathcal{G}^k(U)$ and $\mathcal{G}^k(V)$. The following lemma establishes the connection between GAE and contrastive learning over graphs:
\begin{lemma}\label{lem:equiv}
    Let $h^* \in \arg\min_{h \in \mathcal{H}} \mathcal{I}^{\text{GAEs}}_h(U;V)$. Then we have $\mathcal{I}_{h^*}(U; V) = I(U; V)$.
\end{lemma}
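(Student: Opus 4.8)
The plan is to proceed in two stages: first identify the minimizer $h^*$ of the population GAE objective \eqref{eq:gae_pop} in closed form, and then substitute it into the Donsker--Varadhan functional \eqref{eq:dv} to check that the resulting value equals $I(U;V)$. Throughout I would assume that $\mathcal{H}$ is rich enough to contain the pointwise optimum (a realizability assumption) and that $P_{UV}$ is absolutely continuous with respect to $P_U\times P_V$, so that the relevant density ratio is well defined and $I(U;V)$ is finite.

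First I would treat $\mathcal{I}^{\text{GAEs}}_h$ as a pointwise optimization problem. Writing both expectations as integrals against the densities $p_{UV}$ and $p_U p_V$, the integrand at a fixed pair $(u,v)$ is $-\,p_{UV}(u,v)\log h(u,v) - p_U(u)p_V(v)\log\!\big(1-h(u,v)\big)$. Since the values of $h$ at distinct points are decoupled, I would minimize the scalar map $t \mapsto -a\log t - b\log(1-t)$ on $(0,1)$; it is strictly convex with unique minimizer $t = a/(a+b)$. This gives the Bayes-optimal discriminator
\[
h^*(u,v) = \frac{p_{UV}(u,v)}{p_{UV}(u,v) + p_U(u)\,p_V(v)},
\]
which is the familiar optimal-discriminator computation from the GAN analysis. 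A minor check is that selecting the minimizer independently at each $(u,v)$ yields a measurable $h^*$, which holds because $a/(a+b)$ depends continuously on the densities.

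The key observation is that $h^*$ encodes exactly the log density ratio: a direct manipulation gives $\log\frac{h^*(u,v)}{1-h^*(u,v)} = \log\frac{p_{UV}(u,v)}{p_U(u)\,p_V(v)}$, the pointwise mutual information, which is precisely the optimizing critic in \eqref{eq:dv}. Interpreting $\mathcal{I}_{h^*}$ as $\mathcal{I}_c$ evaluated at the associated logit (pre-sigmoid) critic $c = \log\frac{h^*}{1-h^*}$, I would substitute into \eqref{eq:dv}. The first term becomes $\mathbb{E}_{P_{UV}}\log\frac{p_{UV}}{p_U p_V} = I(U;V)$ by definition of MI, while the second term requires evaluating $\mathbb{E}_{P_U\times P_V}\big[e^{c}\big] = \mathbb{E}_{P_U\times P_V}\big[\tfrac{p_{UV}}{p_U p_V}\big] = \int p_{UV}\,du\,dv = 1$, so its logarithm vanishes. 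Hence $\mathcal{I}_{h^*}(U;V)=I(U;V)$.

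I expect the main obstacle to be not the computation but the assumptions hiding inside it. The realizability requirement is essential: if $\mathcal{H}$ does not contain $h^*$ then only an inequality $\mathcal{I}_{h^*}(U;V)\le I(U;V)$ survives, since $I(U;V)$ is the \emph{supremum} over critics in \eqref{eq:dv}. Integrability (finiteness of $I(U;V)$ and absolute continuity) is needed for the density ratio and both expectations to be meaningful. I would also flag the notational subtlety that $\mathcal{I}_{h^*}$ really denotes $\mathcal{I}_c$ with $c=\log\frac{h^*}{1-h^*}$, so that a probability-valued decoder legitimately plugs into the critic slot; this matches the standard GAE decoder $h_\omega(z_u,z_v)=\sigma(z_u^\top z_v)$, whose logit $z_u^\top z_v$ is exactly the critic $c$.
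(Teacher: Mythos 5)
Your proof is correct and takes essentially the same route as the paper, which disposes of the lemma by citing Poole et al.\ and Tsai et al.\ for exactly the two facts you derive explicitly: the population BCE minimizer is the Bayes discriminator $h^*=p_{UV}/(p_{UV}+p_U p_V)$, whose logit is the pointwise dependency $\log\frac{p_{UV}}{p_U p_V}$, and that critic attains the Donsker--Varadhan supremum with value $I(U;V)$ (the second DV term vanishing since $\mathbb{E}_{P_U\times P_V}\bigl[p_{UV}/(p_U p_V)\bigr]=1$). Your two caveats---realizability of $h^*$ in $\mathcal{H}$ and the reading of $\mathcal{I}_{h^*}$ as $\mathcal{I}_c$ with $c=\log\frac{h^*}{1-h^*}$---are precisely the conventions the paper adopts implicitly, since its $\mathcal{H}$ is the unrestricted function class and its appendix identifies the optimizer of the GAE objective with the log density ratio.
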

Here the domain $\mathcal{H}$ stands for the product space of all possible pairs of subgraphs. The above lemma is a direct consequence of the results in \cite{pmlr-v97-poole19a, tsai2020neural}. Lemma \ref{lem:equiv} states that minimizing the GAE objective \eqref{eq:gae} is in population equivalent to maximizing the mutual information between the $k$-hop subgraphs of adjacent nodes (here $k$ depends on the receptive fields of the encoder). Now suppose that the parameterization $h_\omega(z_u, z_v)$ is sufficiently expressive for approximating $h(u, v)$ for any $h \in \mathcal{H}$, it follows from standard results in M-estimation theory \cite{van2000asymptotic} that the corresponding empirical minimizer of $ \mathcal{I}^{\text{GAEs}}$ converges to the maximizer of $\mathcal{I}$ in probability.
Further discussions on the expressivity and approximation issues can be found in Appendix~\ref{appendix:discussions}.
\begin{remark}
    In a contemporary work~\cite{zhang2022mask}, the authors also made a ``contrastive interpretation'' of the masking procedure in MIM as conducting contrastive learning in an \textbf{implicit form}, in which a contrastive-type loss is used to lower bound the reconstruction loss in the MIM setting. Our formulation under the MGM setting suggests a more \textbf{direct} connection to contrastive learning (in terms of asymptotic equivalent solutions) due to the specific form of the GAE objective~\eqref{eq:gae}.
\end{remark}

\begin{figure*}[t]
    \centering
    \includegraphics[width=\linewidth]{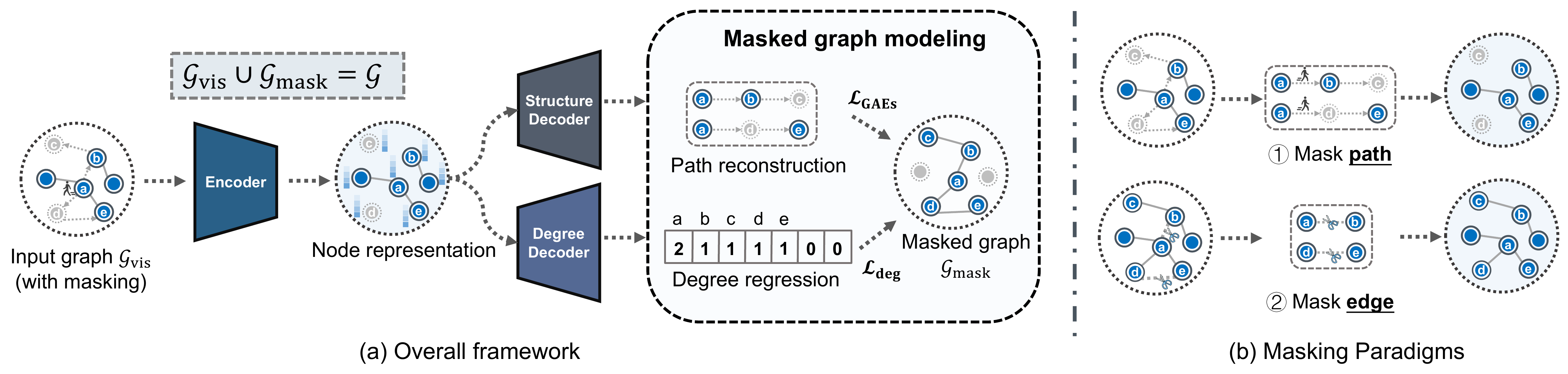}
    \caption{Overview of proposed MaskGAE framework, performing masked graph modeling with an asymmetric encoder-decoder design. During the self-supervised learning phase, an input graph $\mathcal{G}_{\text{vis}}$ is provided, but some of the paths (a set of adjacent edges) are masked. The goal of the model is then to learn to predict the existence of masked edges in the graph $\mathcal{G}_{\text{mask}}$ and the degree of the associated nodes, respectively.
    }
    \label{fig:framework}
\end{figure*}

\subsection{Task related information \& redundancy of GAEs}
\label{sec:redundancy}
The (asymptotic) equivalence of learned representations of GAEs and contrastive learning does not necessarily imply good performance regarding \emph{downstream tasks}. Recent progress on information-theoretic viewpoints of contrastive learning \cite{tian2020what, tsai2021selfsupervised} suggest that for contrastive pretraining to succeed in downstream tasks, the \emph{task irrelevant information} shall be reasonably controlled. Formally, let $U, V$ be random variables of the two contrasting views, and $T$ denote the target of the downstream task.\footnote{Technically, $T$ could be further relaxed to be any \emph{sufficient statistic} of the underlying task, we adopt the more direct formulation for representation clarity} Denote $I(U;V|T)$ as the conditional mutual information of $U$ and $V$ given $T$, we have the following simple identity which is a direct consequence of the chain rule \cite{polyanskiy2014lecture}.
\begin{equation} \label{eq:info_relation}
    \underbrace{I(U; T)}_{\text{supervised goal}} = \underbrace{I(U; V)}_{\text{self-supervised goal}} + \underbrace{I(U; T|V)}_{\text{task relevance}} - \underbrace{I(U; V|T)}_{\text{task irrelevanc}}.
\end{equation}
The relation \eqref{eq:info_relation} implies that, for successful application of contrastively-pretrained representations to downstream tasks, we need both $I(U; T|V)$ and $I(U; V|T)$ to be small.\footnote{When both $I(U; T|V)$ and $I(U; V|T)$ are large, the equation becomes meaningless in that $T$ and $V$ should be nearly independent}
A small value of $I(U; T|V)$ is a standard assumption in information-theoretic characterizations of self-supervised learning \cite{sridharan2008information, tsai2021selfsupervised}. The term $I(U; V|T)$ measures the \emph{task-irrelevant} information contained in the two contrastive views $U$ and $V$, regarding the downstream task $T$, and cannot be reduced via algorithmic designs of encoders and decoders \cite{tsai2021selfsupervised}. It is therefore of interest to examine the efficacy of GAEs via assessing the task-irrelevant information, or redundancy, of the two contrasting views that use $k$-hop subgraphs of adjacent nodes.\par
Intuitively, for certain kinds of downstream task information $T$, we might expect $I(U;V|T)$ to be large under the GAE formulation, since $k$-hop subgraphs of two adjacent nodes share a (potentially) large common subgraph. From a computational point of view, the phenomenon of overlapping subgraphs may affect up to $k-1$ layers of GNN message passing and aggregation during the encoding stage of both nodes, thereby creating a large correlation between the representations, even when the encoder has little relevance with the downstream task. To further justify the above reasoning, we give a lower bound of $I(U;V|T)$ under an independence assumption between graph topology and node features:
\begin{proposition}\label{prop:lower_bound}
    Suppose that the task-related information is completely reflected by the \emph{topological structure} of the underlying subgraph, i.e., $T$ is all the topological information of the underlying graph. Moreover suppose the node features are generated independently from the graph topology, with the feature of each node being(coordinate-wise) independently sampled from a zero-mean distribution supported in $[-1, 1]$ with variance $\gamma$. Let $N_k$ be an upper bound for the size of $k$-hop subgraph associated with each node $v$, i.e., $\max_{v \in \mathcal{V}}|\mathcal{G}^k(v)|_0 \le N_k$, we have the following lower bound of $I(U; V|T)$:
    \begin{align}
        I(U; V| T) \ge \frac{\left(\mathbb{E}[N^k_{uv}]\right)^2}{N_k} \gamma^2,
    \end{align}
    where $N^k_{uv}$ is the size of the overlapping subgraph of $\mathcal{G}^k(u)$ and $\mathcal{G}^k(v)$, and the expectation is taken with respect to the generating distribution of the graph and the randomness in choosing $u$ and $v$.
\end{proposition}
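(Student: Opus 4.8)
The plan is to condition on the topology and reduce the claim to a second-moment estimate that is driven entirely by the shared overlap of the two subgraphs. Since $T$ is assumed to carry all the topological information, conditioning on $T=t$ fixes the vertex sets $\mathcal{G}^k(u)$ and $\mathcal{G}^k(v)$, their sizes $n_u,n_v\le N_k$, and in particular the overlap region $O$ with $|O|=N^k_{uv}$; the only remaining randomness in the two views is then the collection of node features. Because the features are sampled independently across nodes, $U$ and $V$ are conditionally independent given the features on $O$, so all of the conditional dependence between the two views is funneled through the overlap. This localizes the entire contribution to $I(U;V\mid T=t)$ to the $N^k_{uv}$ shared nodes, which is exactly where the factors $N^k_{uv}$ and $\gamma$ will enter.

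First I would lower bound the conditional mutual information by passing to a scalar summary statistic via the data-processing inequality. For a fixed coordinate, set $S_U=\sum_{i\in\mathcal{G}^k(u)}x_i$ and $S_V=\sum_{j\in\mathcal{G}^k(v)}x_j$, so that $I(U;V\mid T=t)\ge I(S_U;S_V\mid T=t)$. Writing $S_U=W+A$ and $S_V=W+B$, where $W=\sum_{i\in O}x_i$ is the shared overlap sum and $A,B$ are the independent contributions of the non-overlapping parts, the second-moment structure becomes transparent: using zero mean, variance $\gamma$, and independence across nodes, one gets $\operatorname{Cov}(S_U,S_V\mid T=t)=\operatorname{Var}(W)=N^k_{uv}\,\gamma$, while the boundedness $|x_i|\le 1$ gives $\operatorname{Var}(S_U\mid T=t)=\sum_{i\in\mathcal{G}^k(u)}\mathbb{E}[x_i^2]\le n_u\le N_k$ and likewise for $S_V$. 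The purpose of this step is to express everything in terms of the covariance $N^k_{uv}\gamma$ and a variance controlled by the pure count $N_k$.

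Next I would convert this covariance structure into a genuine lower bound on $I(S_U;S_V\mid T=t)$. The natural route is an entropic, Gaussian-surrogate argument: bound the conditional differential entropy from above by a linear-MMSE estimate, $h(S_V\mid S_U)\le \tfrac12\log\!\big(2\pi e\,\operatorname{Var}(S_V)(1-\rho^2)\big)$ with $\rho^2=\operatorname{Cov}^2/(\operatorname{Var}(S_U)\operatorname{Var}(S_V))$, and combine it with a matching lower bound on $h(S_V)$ to obtain an inequality of the form $I(S_U;S_V\mid T=t)\ge \operatorname{Cov}(S_U,S_V\mid T=t)^2/N_k$, up to an absolute constant that this step pins down. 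Substituting $\operatorname{Cov}=N^k_{uv}\gamma$ and the variance bound $N_k$ then yields the per-topology estimate $I(U;V\mid T=t)\ge (N^k_{uv})^2\gamma^2/N_k$.

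Finally I would remove the conditioning. Since $I(U;V\mid T)=\mathbb{E}_{t}\,I(U;V\mid T=t)$ and $N^k_{uv}$ depends on $t$ (and on the random choice of the adjacent pair $u,v$), averaging the per-topology bound and applying Jensen's inequality to the convex map $x\mapsto x^2$ gives $\mathbb{E}[(N^k_{uv})^2]\ge(\mathbb{E}[N^k_{uv}])^2$, hence $I(U;V\mid T)\ge (\mathbb{E}[N^k_{uv}])^2\gamma^2/N_k$, as claimed. I expect the main obstacle to be the entropic step: for sums of bounded, \emph{non-Gaussian} features the required lower bound on $h(S_V)$ is delicate, since the differential entropy of a concentrated variable can be very negative, so making the Gaussian surrogate rigorous---for instance via an entropic central limit theorem or a direct bounded-variance argument---and tracking the correct constant is where the real work lies. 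A secondary subtlety is that, with continuous features, the overlap is shared \emph{exactly}, so one must work with the non-degenerate aggregates $S_U,S_V$ rather than the raw feature vectors in order to keep $I(U;V\mid T=t)$ finite and the data-processing reduction meaningful.
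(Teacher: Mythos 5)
Your first stage and last stage match the paper's proof in spirit: condition on the topology, use data processing to pass to a statistic of the features, observe that all dependence between the two views is carried by the overlap (your computation $\operatorname{Cov}(S_U,S_V\mid T=t)=N^k_{uv}\gamma$ is exactly the paper's expectation-gap computation), and finish with Jensen over the randomness of $T$ and $(u,v)$. The gap is the middle step, and it is fatal as stated, not a technicality. Your conversion from covariance to mutual information needs a lower bound on the differential entropy $h(S_V)$, but the proposition's hypotheses allow \emph{any} zero-mean distribution supported in $[-1,1]$ with variance $\gamma$ --- including discrete ones, e.g.\ features taking values $\pm\sqrt{\gamma}$. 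For such features $S_V$ is a discrete random variable, $h(S_V)=-\infty$, and the inequality $I(S_U;S_V)\ge h(S_V)-\tfrac12\log\bigl(2\pi e\,\operatorname{Var}(S_V)(1-\rho^2)\bigr)$ is vacuous. An entropic CLT cannot rescue this: the bound must hold non-asymptotically, and $N_k$ may be a small constant (tiny subgraphs), so there is no regime in which $S_V$ is close to Gaussian in relative entropy. The honest conclusion is that differential entropy is the wrong currency here.

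The paper's resolution is to bypass entropy entirely and convert the second-moment gap into an MI bound via the Donsker--Varadhan variational representation with a \emph{subgaussian critic} (a conditional version of Xu--Raginsky's lemma): if $q$ is $\sigma$-subgaussian under the product of the conditional marginals, then
\begin{align*}
\sigma\sqrt{2\,I(X;Y\mid T)} \;\ge\; \Bigl|\,\mathbb{E}_T\bigl[\mathbb{E}_{\text{joint}}q-\mathbb{E}_{\text{product}}q\bigr]\Bigr|,
\end{align*}
which follows by plugging $\lambda q$ into the DV representation, using subgaussianity to bound the log-MGF by $\lambda^2\sigma^2/2$, and optimizing the resulting parabola in $\lambda$. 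The paper's critic is not your scalar product $S_US_V$ but the Euclidean inner product of sparse vectors encoding the features on each subgraph: under the product measure this is a sum of at most $N_k$ independent, bounded (hence $1$-subgaussian) terms, so $\sigma\le\sqrt{N_k}$, while the expectation gap is $\gamma\,\mathbb{E}[N^k_{uv}]$, giving the claimed bound. Note that this choice of critic matters even if you adopt the DV route: your statistic $q(s_u,s_v)=s_us_v$ is a product of two sums and under the product measure is only $N_k^2$-subgaussian (by boundedness), which would degrade the bound to order $(\mathbb{E}[N^k_{uv}])^2\gamma^2/N_k^4$. So your DPI reduction to scalar sums is itself lossy; the overlap-restricted inner product keeps exactly the right amount of structure.
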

The proof is deferred to Appendix \ref{appendix:proofs}. Proposition \ref{prop:lower_bound} provides a quantitative characterization of task-irrelevant information of GAE. We discuss possible relaxations of the independence assumptions in Appendix \ref{appendix:proofs}.

According to Proposition~\ref{prop:lower_bound}, the redundancy of GAE scales almost linearly with the size of overlapping subgraphs. To design better self-supervised graph learning methods, we need a principled way to reduce the redundancy while keeping task-relevant information $I(U; T|V)$ almost intact.

\section{Present Work: MaskGAE}
In this section, we present the MaskGAE framework for the MGM pretext task, which is developed by taking inspiration from MLM~\cite{bert} and MIM~\cite{mae}.
As shown in Figure~\ref{fig:framework}, MaskGAE is a simple framework tailored by its \emph{asymmetric} design, where an encoder maps the partially observed graph to a latent representation, followed by two decoders reconstructing the information of the masked structure in terms of edge and node levels.
We empirically show that such an asymmetric encoder-decoder architecture helps GAEs learn generalizable and transferable representations easily.
In what follows, we will give the details of the proposed MaskGAE framework from four aspects: masking strategy, encoder, decoder, and learning objective.

\subsection{Masking Strategy}
The major difference between MaskGAE and traditional GAEs is the tailored MGM task, with masking on the input graph as a crucial operation. According to Proposition~\ref{prop:lower_bound}, the redundancy of two paired subgraphs could be reduced significantly if we mask a certain portion of the edges, thereby avoiding a trivial (large) overlapping subgraph. Furthermore, previous empirical evidence~\cite{dropedge} suggested that edge-level information is often redundant for downstream tasks like node classification, thus implicitly implying little degradation over the task-relevant information $I(U; T|V)$.
Let $\mathcal{G}_\text{mask}=(\mathcal{V}, \mathcal{E}_\text{mask})$ denote the graph masked from $\mathcal{G}$ and $\mathcal{G}_\text{vis}$ the remaining visible graph.
Note that $\mathcal{G}_\text{mask} \cup \mathcal{G}_\text{vis} =\mathcal{G}$. Here we introduce two masking strategies that facilitate the MGM task.

\subsubsection{Edge-wise random masking ($\mathcal{T}_{\text{edge}}$).}
A simple and straightforward way to form a masked graph is to sample a subset of edges $\mathcal{E}_{\text{mask}} \subseteq \mathcal{E}$ following a specific distribution, e.g., Bernoulli distribution:
\begin{equation}
    \mathcal{E}_{\text{mask}} \sim \text{Bernoulli}(p),
\end{equation}
where $p<1$ is the masking ratio for the graph. Such an edge masking strategy is widely used in literature~\cite{grace,graphcl,bgrl}. We denote the edge-wise random masking as $\mathcal{T}_{\text{edge}}$ such that $\mathcal{G}_{\text{mask}}=\mathcal{T}_{\text{edge}}(\mathcal{G})$.

\subsubsection{Path-wise random masking ($\mathcal{T}_{\text{path}}$).}
We also propose a novel structured masking strategy that takes \emph{path} as a basic processing unit during sampling. Informally, a path in a graph is a sequence of edges that joins a sequence of adjacent nodes.
Compared to simple edge-wise masking, path-wise masking breaks the short-range connections between nodes, the model must look elsewhere for evidence to fit the structure that is masked out. Therefore, it can better exploit the structure-dependency patterns and capture the high-order proximity for more meaningful MGM tasks. For path-wise masking, we sample a set of masked edges as follows:
\begin{equation}
    \mathcal{E}_{\text{mask}}\sim \text{RandomWalk}(\mathcal{R}, l_\text{walk}),
\end{equation}
where $\mathcal{R} \subseteq \mathcal{V}$ is a set of root nodes to start random walks and $l_\text{walk}$ is the walk length. Here we sample a subset of nodes from graph as root nodes  $\mathcal{R}$ following the Bernoulli distribution, i.e., $\mathcal{R} \sim \text{Bernoulli}(q)$, where $0<q<1$ is the sample ratio for the graph.
We perform simple random walk~\cite{PerozziAS14} to sample masked
edges $\mathcal{E}_{\text{mask}}$, although biased random walk~\cite{grover2016node2vec} are also applicable. We denote path-wise random masking as $\mathcal{T}_{\text{path}}$. To the best of our knowledge, our method is the first to use path-wise masking for graph self-supervised learning.

\subsubsection{Relation with prior work.}
Both MaskGAE and some existing contrastive methods~\cite{grace,graphcl,bgrl} apply masking on the graph. While those contrastive methods use edge masking as an augmentation to generate different structural views for contrasting, MaskGAE employs edge masking for constructing meaningful supervision signals as well as reducing the redundancy between paired subgraph views, thus facilitating the self-supervised learning scheme.

\subsection{Encoder}
In this work, our encoder is graph convolutional networks (GCN)~\cite{kipf2016semi}, a well-established GNN architecture widely used in literature~\cite{velickovic2019deep,mvgrl}.
Our framework allows various choices of encoder architectures, such as SAGE~\cite{hamilton2017inductive} and GAT~\cite{velickovic2018graph}, without any constraints. We opt for simplicity and adopt the commonly used GCN and discuss different alternatives in Section~\ref{sec:abla}.
Unlike traditional GAEs, our encoder only needs to process a small portion of edges during training, since it is applied only on a visible, unmasked subgraph.
This offers an opportunity for designing an efficient and powerful encoder while alleviating the scalability problem to pretrain large GNNs.

\subsection{Decoder}

\subsubsection{Structure decoder.}
The structure decoder is a basic design of GAEs, which by aggregating pairwise node representations as link representations to decode the graph. There are several ways to design such a decoder in literature, one can use the inner product or a neural network for decoding. We define the structure decoder $h_\omega$ with parameters $\omega$ as:
\begin{equation}
    h_\omega(z_u,z_v)=\text{Sigmoid}(\text{MLP}(z_u\circ z_v)),
\end{equation}
where $\text{MLP}$ denotes a multilayer perceptron and $\circ$ is the element-wise product.

\subsubsection{Degree decoder.}
We also introduce the degree decoder as an auxiliary model to balance the proximity and structure information. As the graph structure itself has abundant supervision signals more than edge connections, we can force the model to approximate the node degree in a masked graph to facilitate the training. We define the degree decoder as:
\begin{equation}
    g_\phi(z_v)=\text{MLP}(z_v),
\end{equation}
with $\phi$ parameterizing the degree decoder.

\subsection{Learning objective}
The loss for MaskGAE has two parts: (i) \textbf{Reconstruction loss}. The reconstruction loss measures how well the model rebuilds the \emph{masked} graph in terms of edge-reconstruction, which in a form is similar to Eq.~\eqref{eq:gae}, but replaces the term $\mathcal{E}^+=\mathcal{E}_\text{mask}$.
(ii) \textbf{Regression loss.} The regression loss measures how closely the prediction of node degree matches the original one in the masked graph. We compute the mean squared error (MSE) between the approximated and original degree in terms of node level:
\begin{equation}
    \label{eq:deg}
    \mathcal{L}_{\text{deg}} = \frac{1}{|\mathcal{V}|}\sum_{v\in \mathcal{V}} ||g_\phi(z_v)-\text{deg}_\text{mask}(v)||_F^2,
\end{equation}
where $\text{deg}_\text{mask}(\cdot)$ denotes the node degree in the masked graph $\mathcal{G}_\text{mask}$. Essentially, $\mathcal{L}_{\text{deg}}(\cdot)$ can work as a regularizer for the encoder to learn more generalizable representations. Thus, the overall objective to be minimized is:
\begin{equation}\label{eq:loss}
    \mathcal{L} = \mathcal{L}_{\text{GAEs}} + \alpha\mathcal{L}_{\text{deg}},
\end{equation}
where $\alpha$ is a non-negative hyperparameter trading off two terms.
The algorithm of MaskGAE is provided in Appendix~\ref{appendix:algo}.

\subsection{Masking reduces subgraph overlapping}

\begin{figure}[t]
    \centering
    \includegraphics[width=\linewidth]{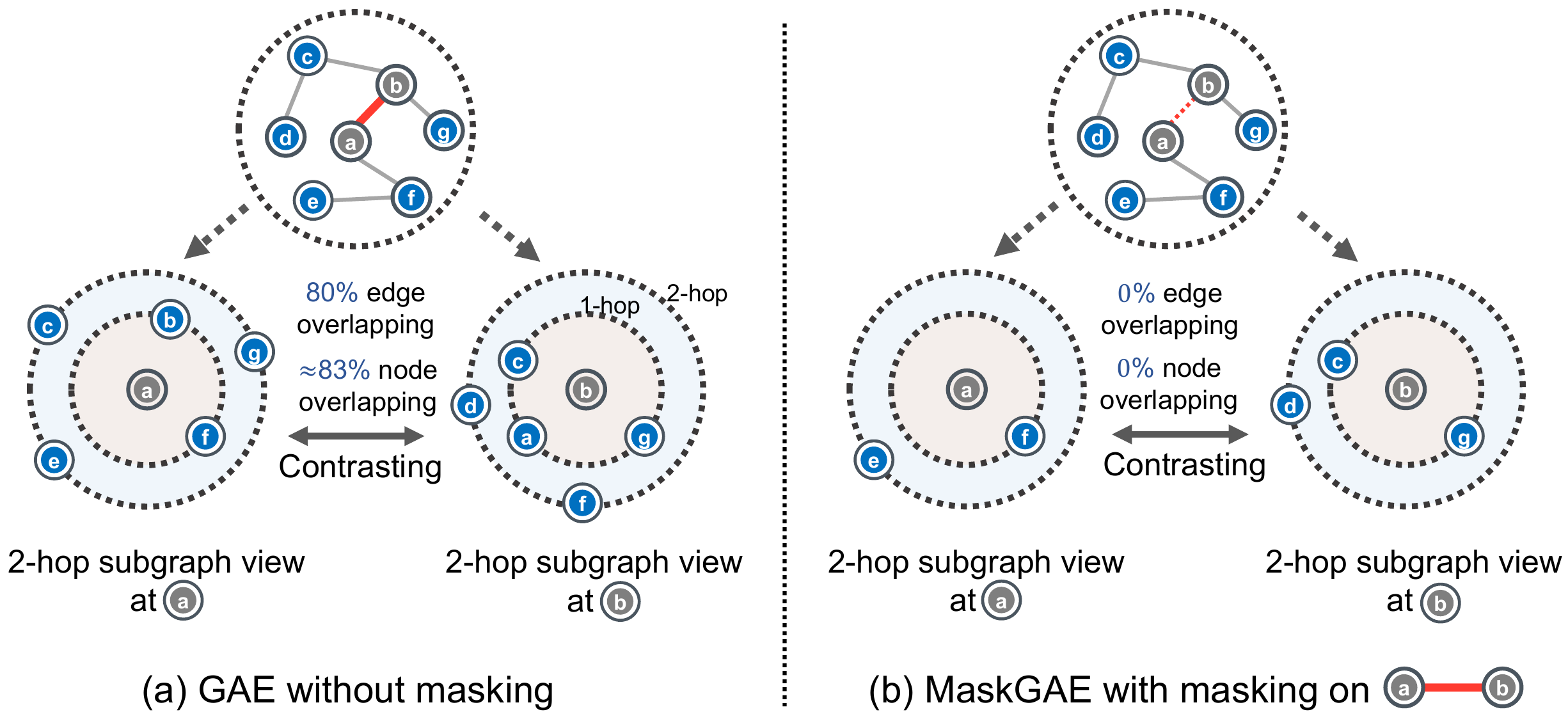}
    \caption{Illustrative examples of the benefits of the masking-and-predicting scheme. Masking on the positive edge helps the contrastive scheme, as it significantly reduces the redundancy of two paired subgraph views.}
    \label{fig:masking}
\end{figure}

\begin{table}[t]
    \centering
    \caption{Statistics of subgraph overlapping (\%).}\label{tab:overlap}
    \resizebox{\linewidth}{!}
    {\begin{tabular}{l c ccc ccc}
            \toprule
             &                           & \multicolumn{3}{c}{\textbf{Cora}} & \multicolumn{3}{c}{\textbf{CiteSeer}}                                                                                                \\
            \cmidrule{3-5}  \cmidrule{6-8}
             &                           & w/o mask                          & $\mathcal{T}_\text{edge}$             & $\mathcal{T}_\text{path}$ & w/o mask & $\mathcal{T}_\text{edge}$ & $\mathcal{T}_\text{path}$ \\
            \midrule
            \multirow{2}{*}{\textbf{$k=2$}}
             & $\mathcal{O}_\text{node}$ & 63.43                             & 31.74                                 & 26.97                     & 71.93    & 33.01                     & 17.05                     \\
             & $\mathcal{O}_\text{edge}$ & 62.17                             & 29.57                                 & 24.90                     & 70.97    & 30.94                     & 15.66                     \\

            \midrule
            \multirow{2}{*}{\textbf{$k=3$}}
             & $\mathcal{O}_\text{node}$ & 69.11                             & 39.66                                 & 33.56                     & 78.90    & 41.07                     & 22.68                     \\
             & $\mathcal{O}_\text{edge}$ & 68.68                             & 38.78                                 & 32.84                     & 78.61    & 40.66                     & 22.34                     \\

            \bottomrule
        \end{tabular}
    }
\end{table}

\paragraph{Illustrative example.}
We first showcase how the masking paradigm assists the self-supervised learning of GAEs. As shown in Figure~\hyperref[fig:masking]{(3a)}, we illustrate how vanilla GAE fails to learn better representations. Consider an edge $a \leftrightarrow b$ to be a positive example, the subgraphs centered at $a$ and $b$ naturally form two contrastive views during message propagation. However, there exists a large overlapping in terms of nodes and edges between the paired subgraph views, which might potentially hinder the contrastive learning of GAE. In contrast, MaskGAE with masking on the edge $a \leftrightarrow b$ can avoid a trivial (large) overlapping subgraph and thus benefit the contrastive scheme of GAEs (see Figure~\hyperref[fig:masking]{3(b)}).

\paragraph{Subgraph overlapping w/ and w/o masking.}
To support our claim, we compute the average overlapping of all paired subgraphs in terms of node and edge levels, denoting as $\mathcal{O}_\text{node}$ and $\mathcal{O}_\text{edge}$, respectively. The computation of both metrics is as follows:
\begin{equation}
    \small
    \begin{aligned}
        \mathcal{O}_\text{node} & = \frac{1}{2|\mathcal{E}^+|}\sum_{(u,v)\in \mathcal{E}^+} \left(\frac{|\mathcal{V}^k(u)\cap \mathcal{V}^k(v)|}{|\mathcal{V}^k(u)|}+\frac{|\mathcal{V}^k(u)\cap \mathcal{V}^k(v)|}{|\mathcal{V}^k(v)|}\right), \\
        \mathcal{O}_\text{edge} & = \frac{1}{2|\mathcal{E}^+|}\sum_{(u,v)\in \mathcal{E}^+} \left(\frac{|\mathcal{E}^k(u)\cap \mathcal{E}^k(v)|}{|\mathcal{E}^k(u)|}+\frac{|\mathcal{E}^k(u)\cap \mathcal{E}^k(v)|}{|\mathcal{E}^k(v)|}\right),
    \end{aligned}
\end{equation}
where $\mathcal{V}^k(v)$ and $\mathcal{E}^k(v)$ represent the node set and edge set w.r.t. the $k$-hop subgraph at node $v$, respectively. Here we compute only on the positive edges, i.e., $\mathcal{E}^+=\mathcal{E}$ (w/o masking) or $\mathcal{E}^+=\mathcal{E}_\text{mask}$ (w/ masking).

\paragraph{Empirical results.}
Table~\ref{tab:overlap} presents the overlapping results on two citation graphs (data statistics are listed in Table~\ref{tab:dataset}) in terms of different hops ($k$) and masking strategies ($\mathcal{T}_\text{edge}$ and $\mathcal{T}_\text{path}$). As shown, both $\mathcal{O}_\text{node}$ and $\mathcal{O}_\text{edge}$ are significantly reduced with our masking strategies. Notably, $\mathcal{T}_\text{path}$ exhibits better capability than $\mathcal{T}_\text{edge}$ in reducing the subgraph overlapping, as evidenced by consistently lower $\mathcal{O}_\text{node}$ and $\mathcal{O}_\text{edge}$ on two datasets. Overall, the result demonstrates the benefits of masking on graphs particularly path-wise masking, which is consistent with experimental results in Section~\ref{sec:exp}.

\section{Experiments}\label{sec:exp}
In this section, we compare MaskGAE with state-of-the-art methods on two fundamental task: link prediction and node classification. The experimental results are also used to validate our theoretical findings presented previously.

\subsection{Experimental setting}

\subsubsection{Datasets.} We conduct experiments on eight well-known benchmark datasets, including three citation networks, i.e., Cora, CiteSeer, Pubmed~\cite{sen2008collective}, two co-purchase graphs, i.e., Photo, Computer~\cite{shchur2018pitfalls}, as well as three large datasets arXiv, MAG and Collab from Open Graph Benchmark~\cite{hu2020ogb}.
The datasets are collected from real-world networks belonging to different domains.
For link prediction task, we randomly split the dataset with 85\%/5\%/10\% of edges for training/validation/testing except for Collab which uses public splits.
All the models are trained on the graph with the training edges.
For the node classification task, we adopt a 1:1:8 train/validation/test random splits for Photo and Computer datasets, as they do not have public splits. The detailed statistics of all datasets are summarized in Table~\ref{tab:dataset}.

\begin{table}[t]
    \centering
    \caption{Dataset statistics.}\label{tab:dataset}
    \begin{threeparttable}
        \resizebox{\linewidth}{!}
        {\begin{tabular}{l|ccccc}
                \toprule
                \multirow{2}{*}{\textbf{Dataset}} & \multirow{2}{*}{\#Nodes} & \multirow{2}{*}{\#Edges} & \multirow{2}{*}{\#Features} & \multirow{2}{*}{\#Classes} & \multirow{2}{*}{Density} \\
                                                  &                          &                          &                             &                            &                          \\
                \midrule
                \textbf{Cora}                     & 2,708                    & 10,556                   & 1,433                       & 7                          & 0.144\%                  \\
                \textbf{CiteSeer}                 & 3,327                    & 9,104                    & 3,703                       & 6                          & 0.082\%                  \\
                \textbf{Pubmed}                   & 19,717                   & 88,648                   & 500                         & 3                          & 0.023\%                  \\
                \textbf{Photo}                    & 7,650                    & 238,162                  & 745                         & 8                          & 0.407\%                  \\
                \textbf{Computer}                 & 13,752                   & 491,722                  & 767                         & 10                         & 0.260\%                  \\
                \textbf{arXiv}                    & 16,9343                  & 2,315,598                & 128                         & 40                         & 0.008\%                  \\
                \textbf{MAG}                      & 736,389                  & 10,792,672               & 128                         & 349                        & 0.002\%                  \\
                \textbf{Collab}                   & 235,868                  & 1,285,465                & 128                         & -                          & 0.002\%                  \\
                \bottomrule
            \end{tabular}
        }
    \end{threeparttable}
\end{table}

\begin{table*}[t]
    \centering
    \caption{Link prediction results (\%) on four graph benchmark datasets. In each column, the \textbf{boldfaced} score denotes the best result and the \underline{underlined} score represents the second-best result.}
    \label{tab:link_predict}
    {\begin{tabular}{lccccccc}
            \toprule
                             & \multicolumn{2}{c}{\textbf{Cora}} & \multicolumn{2}{c}{\textbf{CiteSeer}} & \multicolumn{2}{c}{\textbf{Pubmed}} & \multicolumn{1}{c}{\textbf{Collab}}                                                                                                    \\
            \cmidrule{2-8}
                             & AUC                               & AP                                    & AUC                                 & AP                                  & AUC                            & AP                             & Hit@50                         \\
            \midrule
            GAE              & 91.09 {$\pm$ 0.01}                & 92.83 {$\pm$ 0.03}                    & 90.52 {$\pm$ 0.04}                  & 91.68 {$\pm$ 0.05}                  & 96.40 {$\pm$ 0.01}             & 96.50 {$\pm$ 0.02}             & 47.14 {$\pm$ 1.45}             \\
            VGAE             & 91.40 {$\pm$ 0.01}                & 92.60 {$\pm$ 0.01}                    & 90.80 {$\pm$ 0.02}                  & 92.00 {$\pm$ 0.02}                  & 94.40 {$\pm$ 0.02}             & 94.70 {$\pm$ 0.02}             & 45.53 {$\pm$ 1.87}             \\
            ARGA             & 92.40 {$\pm$ 0.00}                & 93.23 {$\pm$ 0.00}                    & 91.94 {$\pm$ 0.00}                  & 93.03 {$\pm$ 0.00}                  & 96.81 {$\pm$ 0.00}             & 97.11 {$\pm$ 0.00}             & 28.39 {$\pm$ 2.51}             \\
            ARVGA            & 92.40 {$\pm$ 0.00}                & 92.60 {$\pm$ 0.00}                    & 92.40 {$\pm$ 0.00}                  & 93.00 {$\pm$ 0.00}                  & 96.50 {$\pm$ 0.00}             & 96.80 {$\pm$ 0.00}             & 27.32 {$\pm$ 2.93}             \\
            SAGE             & 86.33 {$\pm$ 1.06}                & 88.24 {$\pm$ 0.87}                    & 85.65 {$\pm$ 2.56}                  & 87.90 {$\pm$ 2.54}                  & 89.22 {$\pm$ 0.87}             & 89.44 {$\pm$ 0.82}             & 54.63 {$\pm$ 1.12}             \\
            SEAL             & 92.22 {$\pm$ 1.12}                & 93.12 {$\pm$ 1.01}                    & 93.38 {$\pm$ 0.46}                  & 94.27 {$\pm$ 0.26}                  & 92.99 {$\pm$ 0.99}             & 94.04 {$\pm$ 0.80}             & 64.74 {$\pm$ 0.43}             \\
            MGAE             & 95.05 {$\pm$ 0.76}                & 94.50 {$\pm$ 0.86}                    & 94.85 {$\pm$ 0.49}                  & 94.68 {$\pm$ 0.34}                  & 98.45 {$\pm$ 0.03}             & 98.22 {$\pm$ 0.05}             & 54.74 {$\pm$ 1.06}             \\
            GraphMAE         & 94.88 {$\pm$ 0.23}                & 93.52 {$\pm$ 0.51}                    & 94.32 {$\pm$ 0.40}                  & 93.54 {$\pm$ 0.22}                  & 96.24 {$\pm$ 0.36}             & 95.47 {$\pm$ 0.41}             & 53.97 {$\pm$ 0.64}             \\
            \midrule
            MaskGAE$_{edge}$ & \underline{96.42 {$\pm$ 0.17}}    & \underline{95.91 {$\pm$ 0.25}}        & \textbf{98.02 {$\pm$ 0.22}}         & \textbf{98.18 {$\pm$ 0.21}}         & \underline{98.75 {$\pm$ 0.04}} & \underline{98.66 {$\pm$ 0.06}} & \underline{65.84 {$\pm$ 0.47}} \\
            MaskGAE$_{path}$ & \textbf{96.45 {$\pm$ 0.18}}       & \textbf{95.95 {$\pm$ 0.21}}           & \underline{97.87 {$\pm$ 0.22}}      & \underline{98.09 {$\pm$ 0.17}}      & \textbf{98.84 {$\pm$ 0.04}}    & \textbf{98.78 {$\pm$ 0.05}}    & \textbf{65.98 {$\pm$ 0.39}}    \\
            \bottomrule
        \end{tabular}
    }

\end{table*}

\subsubsection{MaskGAE setup.}
According to different masking strategies, we have MaskGAE$_{edge}$ and MaskGAE$_{path}$, denoting MaskGAE with \textbf{\underline{e}}dge-wise and \textbf{\underline{p}}ath-wise masking strategies, respectively. By default, we fix the mask ratio $p$ to $0.7$ for $\mathcal{T}_\text{edge}$. For $\mathcal{T}_\text{path}$, we randomly sample 70\% of nodes from $\mathcal{V}$ as root nodes $\mathcal{R}$ (i.e., $q=0.7$), performing random walk~\cite{PerozziAS14} starting from $\mathcal{R}$ with $l_\text{walk}=k+1$. For all datasets except arXiv, two GCN layers are applied for the encoder, and two MLP layers are applied for both structure and degree decoders. We increase the number of layers to four for arXiv dataset. Batch normalization~\cite{batchnorm} and ELU~\cite{elu} activation are applied on every hidden layer of encoder. We fix $d_h=64$ for Cora, CiteSeer, and Pubmed, $d_h=128$ for Photo and Computer, $d_h=512$ for arXiv and MAG. For $\alpha$, grid search was performed over the following search space: $\{0, 1e^{-3}, \ldots, 1e^{-2}\}$. In the self-supervised pre-training stage, we employ an Adam optimizer with an initial learning rate of 0.01 and train for 500 epochs except 100 for arXiv and MAG. We also use early stopping with the patience of 30, where we stop training if there is no further improvement on the validation accuracy during 30 epochs.

\subsubsection{Evaluation.}
We consider two fundamental tasks associated with graph representation learning: link prediction and node classification.
For link prediction task, we randomly sample 10\% existing edges from each dataset as well as the same number of nonexistent edges (unconnected node pairs) as testing data. For node classification task, we follow the linear evaluation scheme as introduced in ~\cite{velickovic2019deep,cca_ssg}: We first train the model in a graph with self-defined supervisions and pretext tasks. We then \emph{concatenate} the output of each layer as the node representation. The final evaluation is done by fitting a linear classifier (i.e., a logistic regression model) on top of the frozen learned embeddings without flowing any gradients back to the encoder.
Note that we did not compare with the related work MGAE~\cite{mgae} in the node classification task as the official code is not released and it adopts a rather different experimental setting.

\subsubsection{Software and hardware Specifications.}
Our framework is built upon PyTorch~\cite{pytorch} and PyTorch Geometric~\cite{pyg}. All datasets used throughout experiments are publicly available in PyTorch Geometric library. All experiments are done on a single NVIDIA RTX 2080 Ti GPU (with 11GB memory).

\begin{table*}[t]
    \centering
    \caption{Node classification accuracy (\%) on seven benchmark datasets.  In each column, the \textbf{boldfaced} score denotes the best result and the \underline{underlined} score represents the second-best result. }
    \label{tab:node_clas}
    \begin{tabular}{lccccccc}
        \toprule
                         & \textbf{Cora}                  & \textbf{CiteSeer}              & \textbf{Pubmed}                & \textbf{Photo}                 & \textbf{Computer}              & \textbf{arXiv}                 & \textbf{MAG}                   \\
        \midrule
        MLP              & 47.90 {$\pm$ 0.40}             & 49.30 {$\pm$ 0.30}             & 69.10 {$\pm$ 0.20}             & 78.50 {$\pm$ 0.20}             & 73.80 {$\pm$ 0.10}             & 56.30 {$\pm$ 0.30}             & 22.10 {$\pm$ 0.30}             \\
        GCN              & 81.50 {$\pm$ 0.20}             & 70.30 {$\pm$ 0.40}             & 79.00 {$\pm$ 0.50}             & 92.42 {$\pm$ 0.22}             & 86.51 {$\pm$ 0.54}             & 70.40 {$\pm$ 0.30}             & 30.10 {$\pm$ 0.30}             \\
        GAT              & 83.00 {$\pm$ 0.70}             & 72.50 {$\pm$ 0.70}             & 79.00 {$\pm$ 0.30}             & 92.56 {$\pm$ 0.35}             & 86.93 {$\pm$ 0.29}             & 70.60 {$\pm$ 0.30}             & 30.50 {$\pm$ 0.30}             \\
        \midrule
        GAE              & 74.90 {$\pm$ 0.40}             & 65.60 {$\pm$ 0.50}             & 74.20 {$\pm$ 0.30}             & 91.00 {$\pm$ 0.10}             & 85.10 {$\pm$ 0.40}             & 63.60 {$\pm$ 0.50}             & 27.10 {$\pm$ 0.30}             \\
        VGAE             & 76.30 {$\pm$ 0.20}             & 66.80 {$\pm$ 0.20}             & 75.80 {$\pm$ 0.40}             & 91.50 {$\pm$ 0.20}             & 85.80 {$\pm$ 0.30}             & 64.80 {$\pm$ 0.20}             & 27.90 {$\pm$ 0.20}             \\
        ARGA             & 77.95 {$\pm$ 0.70}             & 64.44 {$\pm$ 1.19}             & 80.44 {$\pm$ 0.74}             & 91.82 {$\pm$ 0.08}             & 85.86 {$\pm$ 0.11}             & 67.34 {$\pm$ 0.09}             & 28.36 {$\pm$ 0.12}             \\
        ARVGA            & 79.50 {$\pm$ 1.01}             & 66.03 {$\pm$ 0.65}             & 81.51 {$\pm$ 1.00}             & 91.51 {$\pm$ 0.09}             & 86.02 {$\pm$ 0.11}             & 67.43 {$\pm$ 0.08}             & 28.32 {$\pm$ 0.18}             \\
        GraphMAE         & \underline{84.20 {$\pm$ 0.40}} & \underline{73.40 {$\pm$ 0.40}} & 81.10 {$\pm$ 0.40}             & 93.23      {$\pm$ 0.13}        & 89.51 {$\pm$ 0.08}             & \textbf{71.75 {$\pm$ 0.17}}    & 32.25 {$\pm$ 0.37}             \\
        \midrule
        DGI              & 82.30 {$\pm$ 0.60}             & 71.80 {$\pm$ 0.70}             & 76.80 {$\pm$ 0.60}             & 91.61 {$\pm$ 0.22}             & 83.95 {$\pm$ 0.47}             & 65.10 {$\pm$ 0.40}             & 31.40 {$\pm$ 0.30}             \\
        GMI              & 83.00 {$\pm$ 0.30}             & 72.40 {$\pm$ 0.10}             & 79.90 {$\pm$ 0.20}             & 90.68 {$\pm$ 0.17}             & 82.21 {$\pm$ 0.31}             & 68.20 {$\pm$ 0.20}             & 29.50 {$\pm$ 0.10}             \\
        GRACE            & 81.90 {$\pm$ 0.40}             & 71.20 {$\pm$ 0.50}             & 80.60 {$\pm$ 0.40}             & 92.15 {$\pm$ 0.24}             & 86.25 {$\pm$ 0.25}             & 68.70 {$\pm$ 0.40}             & 31.50 {$\pm$ 0.30}             \\
        GCA              & 81.80 {$\pm$ 0.20}             & 71.90 {$\pm$ 0.40}             & 81.00 {$\pm$ 0.30}             & 92.53 {$\pm$ 0.16}             & 87.85 {$\pm$ 0.31}             & 68.20 {$\pm$ 0.20}             & 31.40 {$\pm$ 0.30}             \\
        MVGRL            & 82.90 {$\pm$ 0.30}             & 72.60 {$\pm$ 0.40}             & 80.10 {$\pm$ 0.70}             & 91.70 {$\pm$ 0.10}             & 86.90 {$\pm$ 0.10}             & 68.10 {$\pm$ 0.10}             & 31.60 {$\pm$ 0.40}             \\
        BGRL             & 82.86 {$\pm$ 0.49}             & 71.41 {$\pm$ 0.92}             & 82.05 {$\pm$ 0.85}             & 93.17 {$\pm$ 0.30}             & \textbf{90.34 {$\pm$ 0.19}}    & \underline{71.64 {$\pm$ 0.12}} & 31.11 {$\pm$ 0.11}             \\
        SUGRL            & 83.40 {$\pm$ 0.50}             & 73.00 {$\pm$ 0.40}             & 81.90 {$\pm$ 0.30}             & 93.20 {$\pm$ 0.40}             & 88.90 {$\pm$ 0.20}             & 69.30 {$\pm$ 0.20}             & 32.40 {$\pm$ 0.10}             \\
        CCA-SSG          & 83.59 {$\pm$ 0.73}             & 73.36 {$\pm$ 0.72}             & 80.81 {$\pm$ 0.38}             & 93.14 {$\pm$ 0.14}             & 88.74 {$\pm$ 0.28}             & 69.22 {$\pm$ 0.22}             & 27.57 {$\pm$ 0.41}             \\
        \midrule
        MaskGAE$_{edge}$ & 83.77 {$\pm$ 0.33}             & 72.94 {$\pm$ 0.20}             & \underline{82.69 {$\pm$ 0.31}} & \underline{93.30 {$\pm$ 0.04}} & 89.44 {$\pm$ 0.11}             & 70.97 {$\pm$ 0.29}             & \underline{32.75 {$\pm$ 0.43}} \\
        MaskGAE$_{path}$ & \textbf{84.30 {$\pm$ 0.39}}    & \textbf{73.80 {$\pm$ 0.81}}    & \textbf{83.58 {$\pm$ 0.45}}    & \textbf{93.31 {$\pm$ 0.13}}    & \underline{89.54 {$\pm$ 0.06}} & 71.16 {$\pm$ 0.33}             & \textbf{32.79 {$\pm$ 0.32}}    \\
        \bottomrule
    \end{tabular}
\end{table*}

\subsection{Performance comparison on link prediction}

As GAEs, including our MaskGAE, are initially designed for graph reconstruction, the trained model can easily perform link prediction tasks without additional fine-tuning. For fair comparison, we mainly compare MaskGAE with the following methods, including (V)GAE~\cite{kipf2016variational}, AR(V)GA~\cite{PanHLJYZ18_arge}, SAGE~\cite{hamilton2017inductive}, SEAL~\cite{seal}, and MGAE~\cite{mgae}, which are trained in an end-to-end fashion to perform link prediction. Since GraphMAE is not initially designed for link prediction tasks, we instead trained an MLP decoder on the learned representations for GraphMAE to solve the link prediction task.
Following a standard manner of learning-based link prediction~\cite{kipf2016variational,seal}, we conduct experiments on Cora, CiteSeer, Pubmed, and Collab, removing 5\% edges for validation and 10\% edges for test.
All the models are trained on the graph with the remaining 85\% existing edges.

We report AUC score, average precision (AP), and Hits@$k$ which counts the ratio of positive edges ranked at the $k$-th place or above against all
the negative edges. The results of baselines are quoted from~\cite{PanHLJYZ18_arge,mgae,seal}. As shown in Table~\ref{tab:link_predict}, both variants of MaskGAE achieve the leading performance over all compared algorithms regarding all evaluation metrics.
Specifically, both MaskGAE alternatives significantly outperform vanilla GAE on three citation datasets with an increase in AUC and AP scores by 5\% on average, demonstrating the effectiveness of the proposed MaskGAE framework. In addition, MaskGAE outperforms strong baselines and advances new state-of-the-art on the large dataset Collab. We believe the result comes from the benefits of the MGM paradigm. On one hand, MGM helps self-supervised learning of GAEs by reducing the redundancy of contrastive views. On the other hand, MGM naturally eliminates the discrepancy between pre-training and downstream tasks, as the model is required to predict a portion of edges that are ``invisible'' given partially visible structure in both training and inference stages.
In conclusion, MGM is able to benefit the performance of GAEs in modeling the graph structure and outperforming the leading baselines.

\vspace{-3mm}
\subsection{Performance comparison on node classification}
We also perform node classification under the self-supervised learning setting to validate the effectiveness of learned representations. The model is trained on a partially observed graph in a self-supervised manner, following the same setting as the link prediction task.
We compare against baselines belonging to the three categories: (i) \textbf{generative learning methods}, including (V)GAE~\cite{kipf2016variational}, AR(V)GA~\cite{PanHLJYZ18_arge}, and GraphMAE~\cite{graphmae}; (ii) \textbf{contrastive learning methods}, including DGI~\cite{velickovic2019deep}, GMI~\cite{gmi}, GRACE~\cite{grace}, GCA~\cite{gca}, MVGRL~\cite{mvgrl}, BGRL~\cite{bgrl}, SUGRL~\cite{sugrl}, and CCA-SSG~\cite{cca_ssg}. (iii) \textbf{(semi-)supervised methods}, including MLP, GCN~\cite{kipf2016semi} and GAT~\cite{velickovic2018graph}. Both (i) and (ii) are self-supervised methods. We closely follow the linear evaluation scheme as introduced in~\cite{velickovic2019deep,cca_ssg} and report the classification accuracy on all datasets.

The results are presented in Table~\ref{tab:node_clas}.
For self-supervised approaches, we can see that the contrastive methods generally outperform generative methods (GAEs), which verify the effectiveness of contrastive learning. This also highlights the fact that the learned representations by simple GAEs are not task-agnostic, which cannot generalize to other downstream tasks beyond link prediction. In contrast, both variants of MaskGAE outperform or match the best contrastive approaches across all datasets. In addition, MaskGAE also outperforms the state-of-the-art method GraphMAE on six out of seven datasets.
Even compared with the supervised ones, MaskGAE demonstrates superior performance and outperforms all the baselines. The results clearly demonstrate the effectiveness of the proposed MaskGAE framework and validate our claims.

It is observed that MaskGAE exhibits good computational scaling and downstream performance on large datasets arXiv and MAG. This may be due to the fact that only a small portion of unmasked edges are processed by the encoder during training and thus benefiting the scalability. Note that CCA-SSG has a poor performance on arXiv and MAG since it is essentially a dimension-reduction method, where the ideal embedding dimension ought to be smaller than input one~\cite{cca_ssg}.

While both masking strategies assist self-supervised learning for MaskGAE by recovering the masked edges, they are built on different philosophies of structural masking.
We also see that both variants of MaskGAE using masking strategies achieve the leading performance, while MaskGAE$_{path}$ achieves relatively better performance than MaskGAE$_{edge}$ in most cases. It is probably because that path-wise masking enforces a model to predict a set of edges that are locally correlated, thus creating a task that cannot be easily solved by extrapolation from visible graph structure. In this regard, the learned representations are more robust and generalizable, leading to better performance on downstream tasks. This implies that path-wise masking might be a more promising masking strategy for the MGM task.

\subsection{Ablation study}
\label{sec:abla}
In this subsection, we perform ablation studies over the key components of MaskGAE to understand their functionalities thoroughly.

\begin{figure}[t]
    \centering
    \includegraphics[width=0.47\linewidth, height=0.4\hsize]{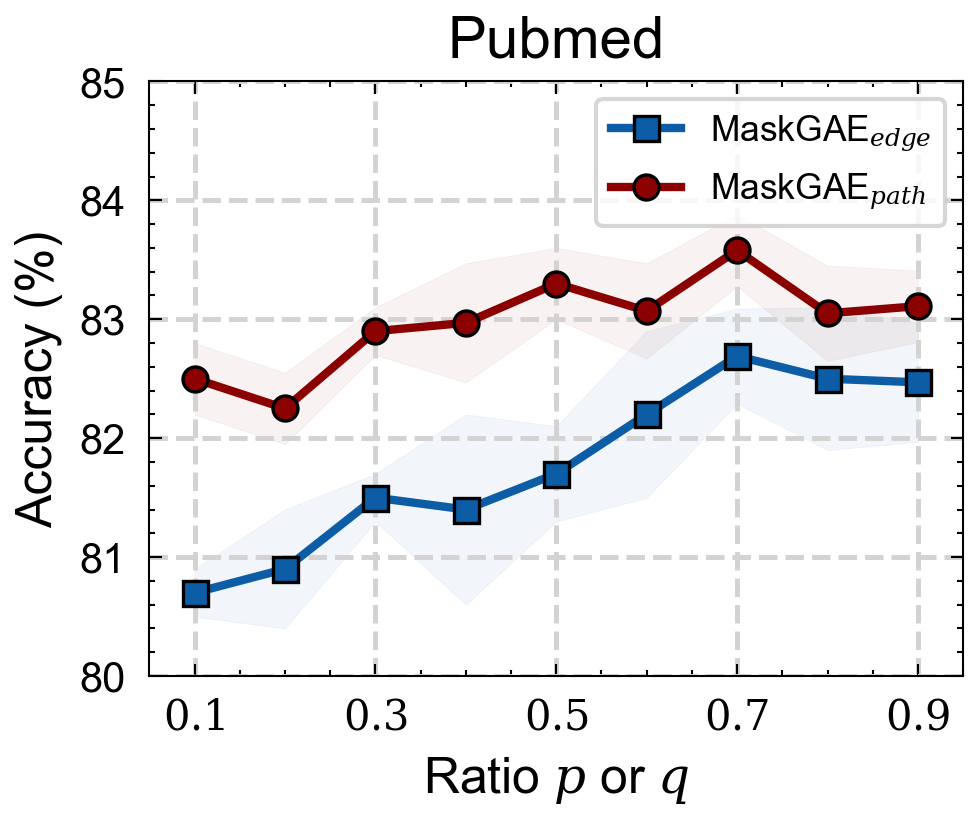}\hspace{0.3cm}
    \includegraphics[width=0.47\linewidth, height=0.4\hsize]{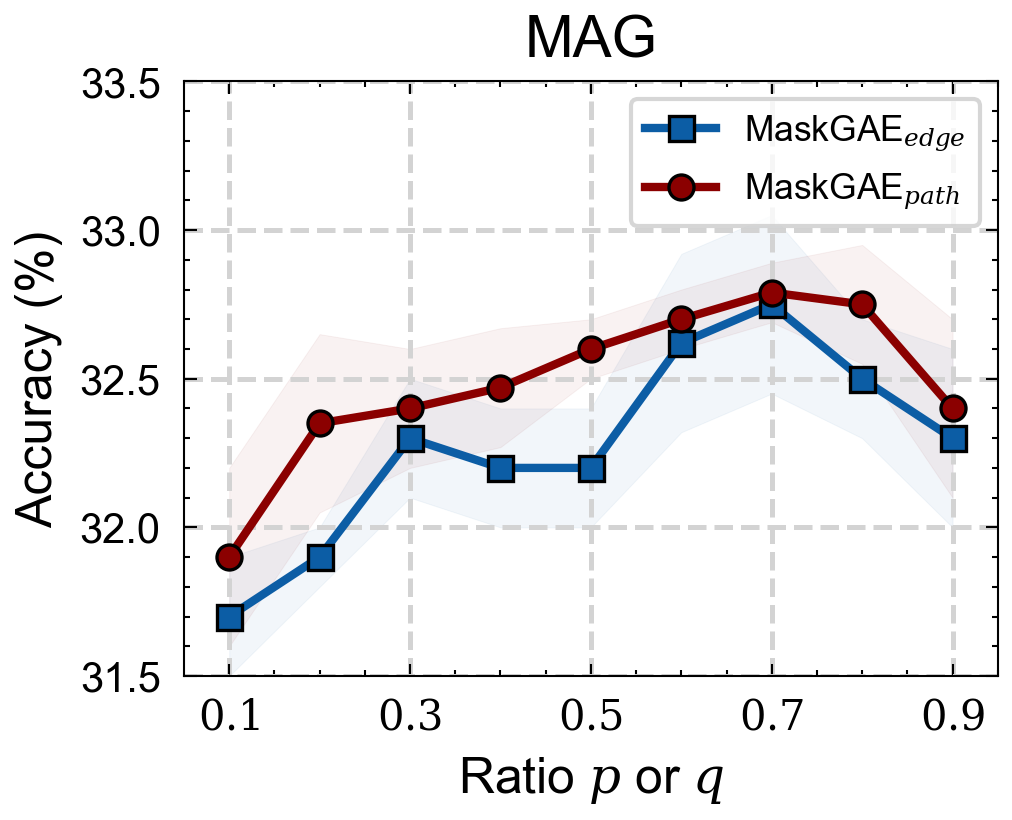}
    \caption{Effect of $p$ for $\mathcal{T}_\text{edge}$ and $q$ for $\mathcal{T}_\text{path}$.}
    \label{fig:ablation_edge}
    \vspace{-3mm}
\end{figure}

\subsubsection{Effect of $p$ and $q$.} Given that masking strategies are critical for our MGM pretext tasks, we first study how the number of masked edges improves or degrades the model performance. Both $p$ and $q$ control the size of masked edges in $\mathcal{T}_\text{edge}$ and $\mathcal{T}_\text{path}$ in different ways. From Figure~\ref{fig:ablation_edge}, we can see that both variants of MaskGAE show similar trends on two datasets when increasing the number of masked edges. The masking strategies indeed yield significant performance improvements on the downstream performance. Particularly, the performance of MaskGAE is smoothly improved when a large mask ratio is adopted, which validates the information redundancy in graphs and also aligns with our theoretical justifications that MGM improves the self-supervised learning scheme. When $p$ or $q$ reaches a critical value, e.g., $0.7$, the best performance is achieved. As shown, two parameters that are too large or too small can lead to poor performance. In this regard, reasonable values of $p$ and $q$ are required to learn useful representations. The results are also in line with previous work~\cite{dropedge} where removing a sufficiently large portion of edges would facilitate the training of GNNs.

\begin{figure}[t]
    \centering
    \includegraphics[width=0.47\linewidth, height=0.4\hsize]{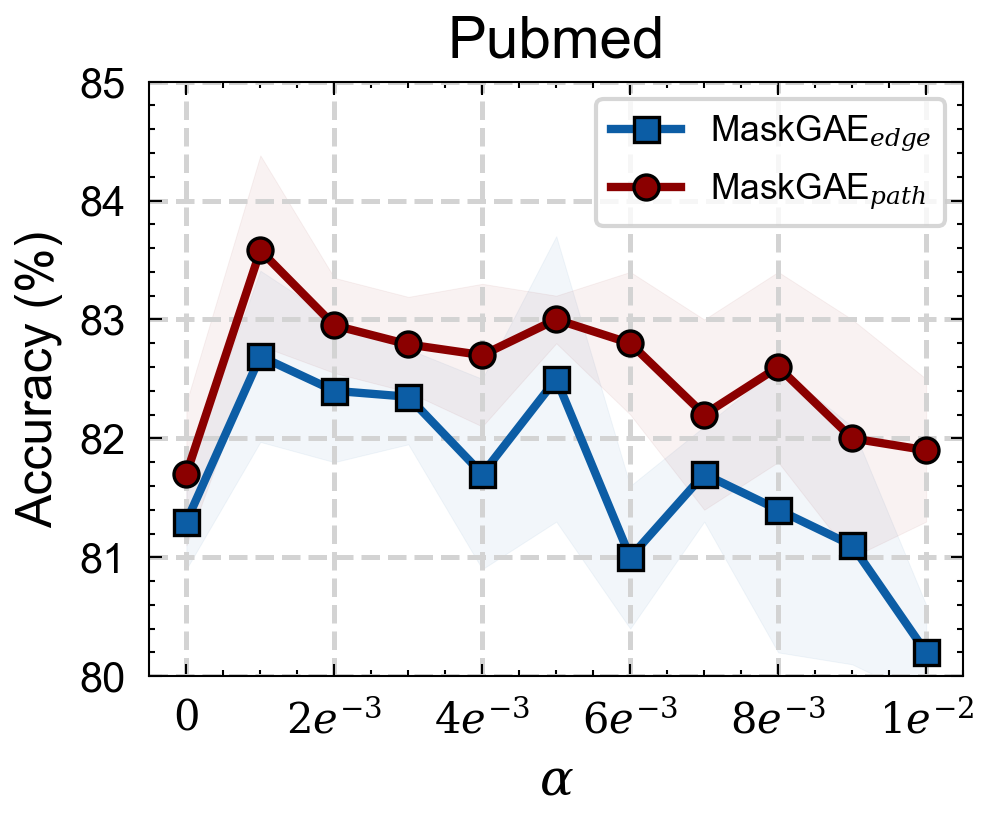}\hspace{0.3cm}
    \includegraphics[width=0.47\linewidth, height=0.4\hsize]{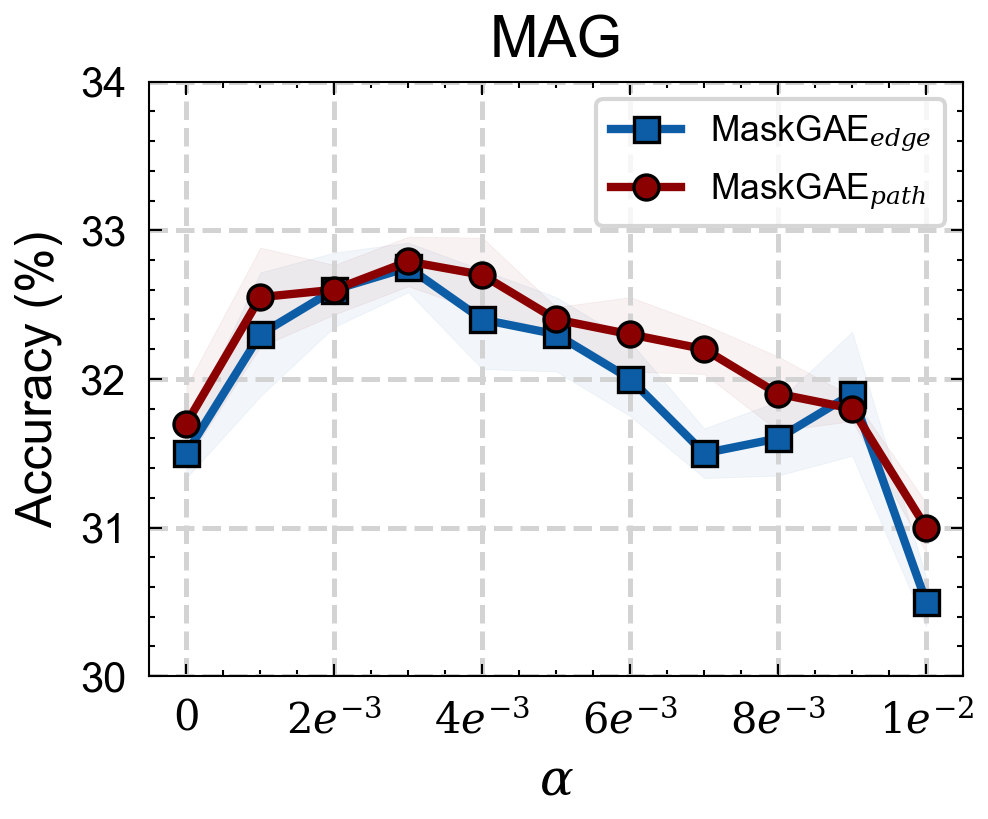}
    \vspace{-5mm}
    \caption{Effect of $\alpha$.}
    \label{fig:ablation_alpha}
    \vspace{-5mm}
\end{figure}

\subsubsection{Effect of $\alpha$.}
We further investigate whether $\mathcal{L}_\text{deg}$ improves the performance as an auxiliary loss by varying the trade-off hyper-parameter $\alpha$ from 0 to $1e^{-2}$. Figure~\ref{fig:ablation_alpha} shows the classification accuracy of both MaskGAE variants w.r.t. different $\alpha$ on Pubmed and MAG. We can see that the performance is boosted when $\alpha>0$ particularly for MaskGAE$_{path}$. This indicates that $\mathcal{L}_\text{deg}$ is important as it contributes to learning good representations for downstream tasks. However, the model would potentially overfit the structure information with a large $\alpha$, as evidenced by increasing $\alpha$ results in a poor performance of MaskGAE.

\subsubsection{Effect of embedding size $d_h$.} Figure~\ref{fig:ablation_embed} ablates the effect of varying embedding size. Embedding size is important for graph representation learning, which reflects the effectiveness of information compression. Compared to contrastive methods~\cite{velickovic2019deep,sugrl,grace}, an interesting finding is that MaskGAE does not benefit significantly from a large embedding dimension and a large range of embedding sizes (64-512) performs equally well, particularly for MaskGAE$_{edge}$.
Compared to sparse datasets Cora and Pubmed, MaskGAE is more stable on dense datasets Photo and Computer.
In addition, we notice a performance drop for MaskGAE as the embedding size increases. Notably, a small embedding size (64 in most cases) is sufficient for MaskGAE, while other contrastive methods generally require much larger dimensions (e.g., 512) to achieve a decent performance~\cite{sugrl,velickovic2019deep}. This indicates that the representations learned by MaskGAE are informative and discriminative for downstream tasks as a small embedding size is required.

\begin{figure}[t]
    \centering
    \includegraphics[width=0.47\linewidth, height=0.4\hsize]{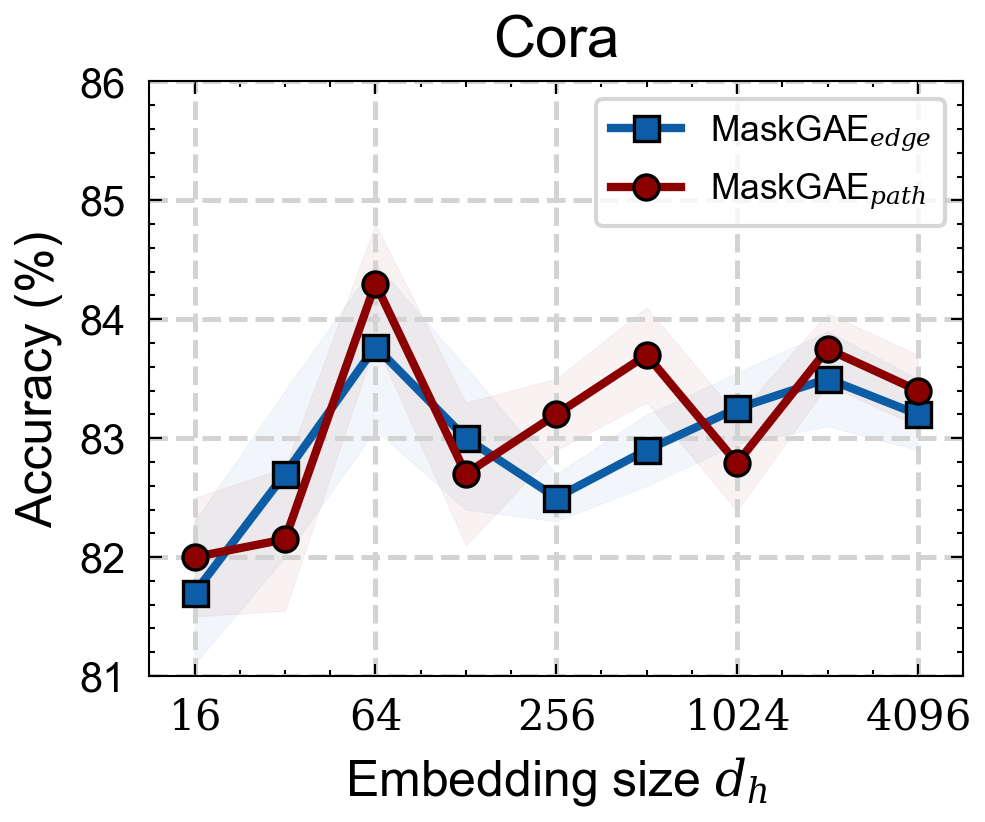}\hspace{0.3cm}
    \includegraphics[width=0.47\linewidth, height=0.4\hsize]{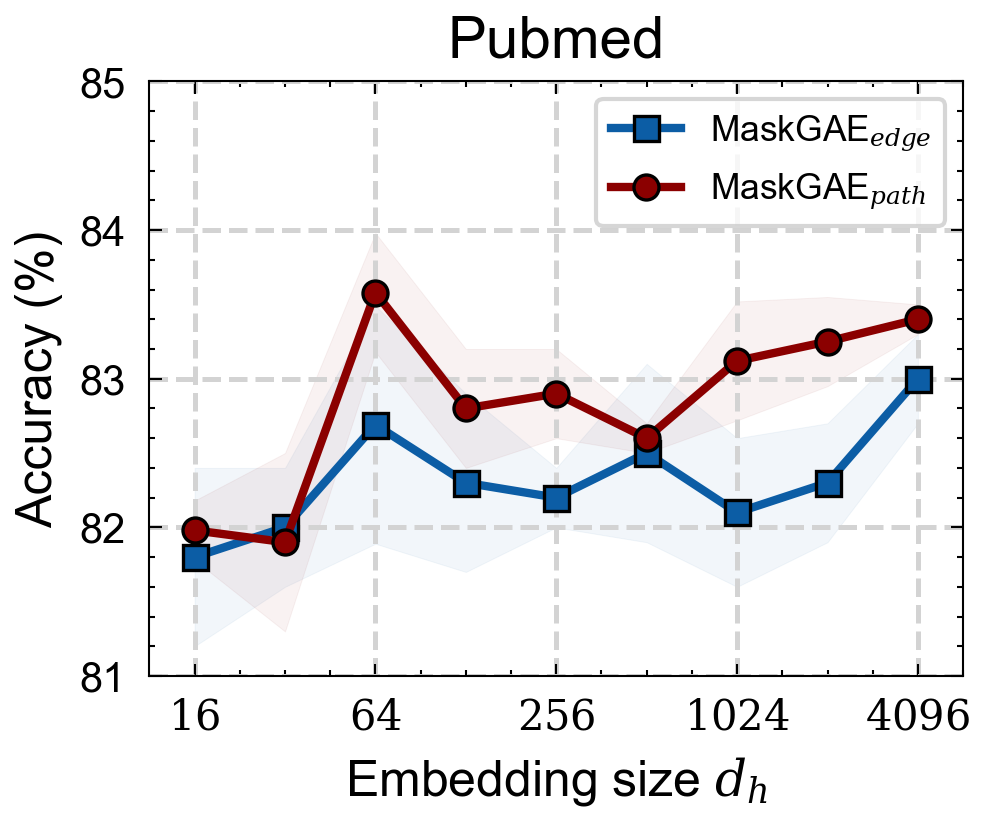}
    \includegraphics[width=0.47\linewidth, height=0.4\hsize]{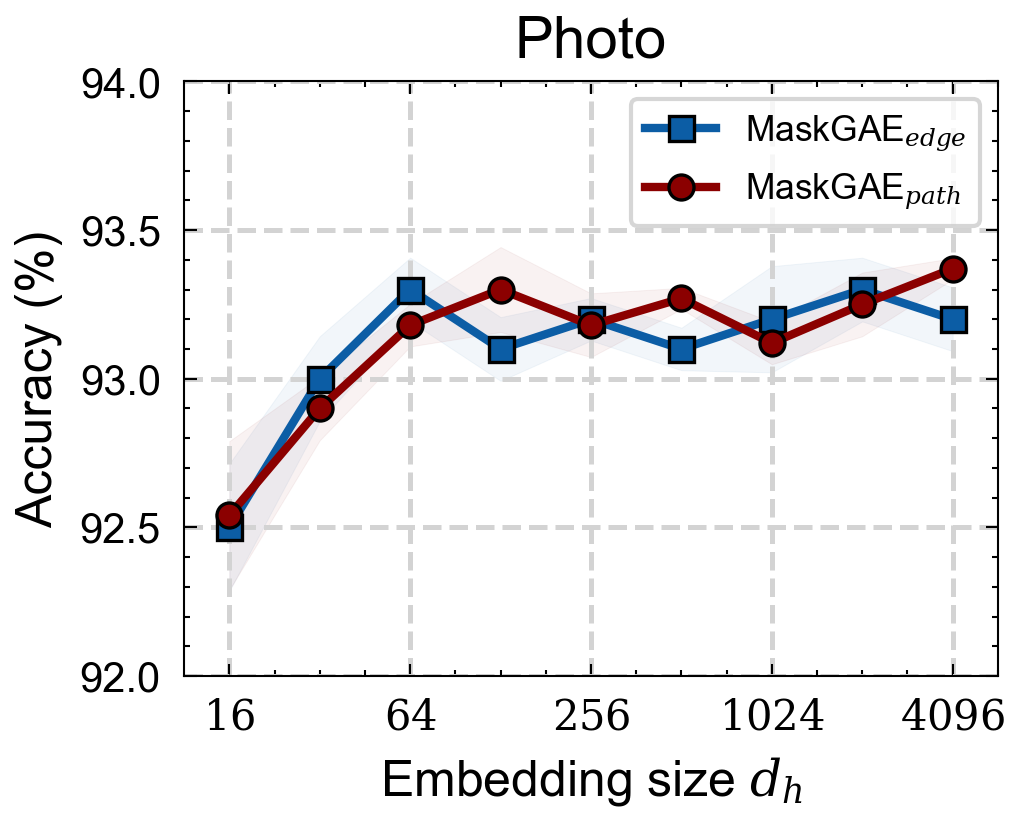}\hspace{0.3cm}
    \includegraphics[width=0.47\linewidth, height=0.4\hsize]{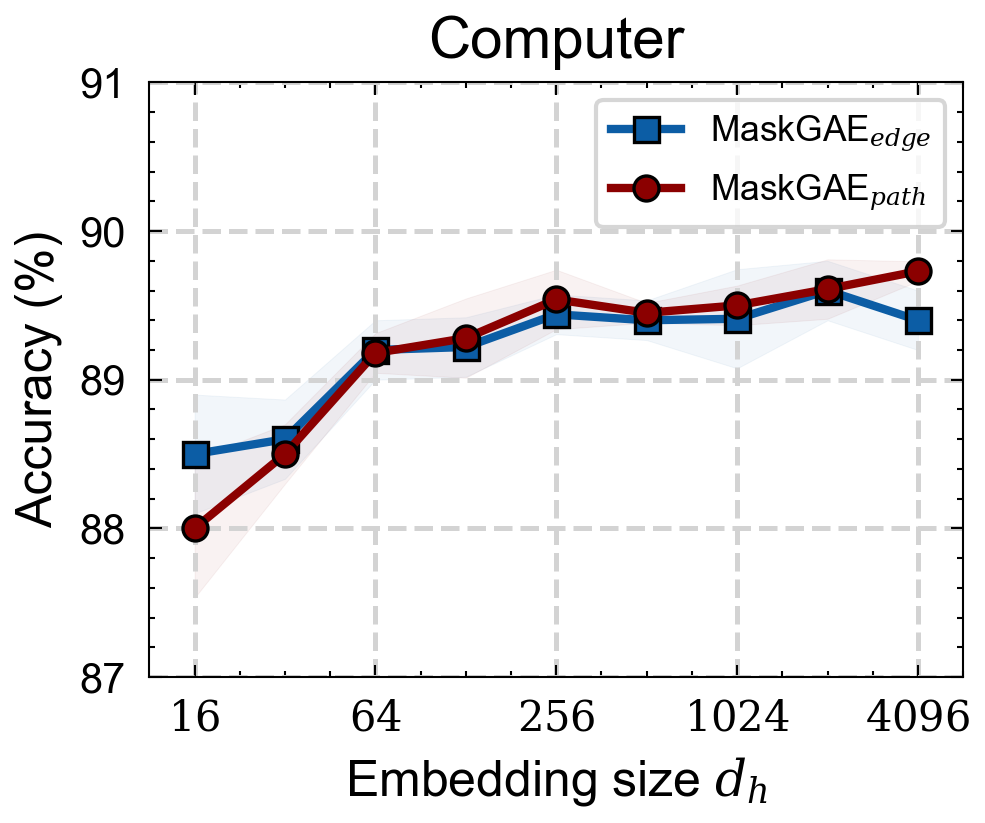}
    \caption{Effect of embedding size $d_h$.}
    \label{fig:ablation_embed}
\end{figure}

\subsubsection{Effect of encoder architecture.}
The encoder plays an important role in mapping graphs into low-dimensional representations. Typically, a wide range of literature~\cite{velickovic2019deep,bgrl,gca,cca_ssg} uses GCN as the default option to implement the encoder and achieve outstanding performance on the node classification task.
To explore the possibility of designing powerful self-supervised methods, we conduct ablation studies on three citation graphs with different GNN encoders, including SAGE~\cite{hamilton2017inductive}, GAT~\cite{velickovic2018graph}, and GCN~\cite{kipf2016semi}. The results are shown in Table~\ref{tab:encoder}. We can observe that GCN is the best encoder architecture for MaskGAE with both masking strategies. MaskGAE with GCN as the encoder
exhibits significantly improved performances over GAT and SAGE in all cases, indicating that a simple encoder with the aid of MGM is sufficient to learn useful representations.

\begin{table}[t]
    \centering
    \caption{Accuracy (\%) of MaskGAE with different encoders in node classification task.}\label{tab:encoder}
    \resizebox{\linewidth}{!}
    {\begin{tabular}{lcccc}
            \toprule
             &      & \textbf{Cora}               & \textbf{CiteSeer}           & \textbf{Pubmed}             \\
            \midrule
            \multirow{3}{*}{\textbf{MaskGAE$_{edge}$}}
             & SAGE & 81.32 {$\pm$ 0.18}          & 71.19 {$\pm$ 0.10}          & 80.17 {$\pm$ 0.36}          \\
             & GAT  & 81.99 {$\pm$ 0.14}          & 71.95 {$\pm$ 0.35}          & 81.21 {$\pm$ 0.12}          \\
             & GCN  & \textbf{83.77 {$\pm$ 0.33}} & \textbf{72.94 {$\pm$ 0.20}} & \textbf{82.69 {$\pm$ 0.31}} \\
            \midrule
            \multirow{3}{*}{\textbf{MaskGAE$_{path}$}}
             & SAGE & 80.66 {$\pm$ 0.12}          & 72.49 {$\pm$ 0.17}          & 79.73 {$\pm$ 0.28}          \\
             & GAT  & 82.18 {$\pm$ 0.23}          & 72.99 {$\pm$ 0.30}          & 81.84 {$\pm$ 0.46}          \\
             & GCN  & \textbf{84.30 {$\pm$ 0.39}} & \textbf{73.80 {$\pm$ 0.81}} & \textbf{83.58 {$\pm$ 0.45}} \\
            \bottomrule
        \end{tabular}
    }
\end{table}

\section{Conclusion}
In this work, we make a comprehensive investigation on masked graph modeling (MGM) and present MaskGAE, a theoretically grounded self-supervised learning framework that takes MGM as a principled pretext task. Our work is conceptually well-understood and theoretically motivated with the following justifications: (i) GAEs are essentially contrastive learning models that maximize the mutual information between paired subgraph views associated with a linked edge, and (ii) MGM can benefit the mutual information maximization since masking significantly reduce the overlapping (redundancy) between two subgraph views. In particular, we also propose a path-wise masking strategy to facilitate the MGM task. In our experiments, MaskGAE exhibits significantly improved performances over GAEs and performs on par with or better than strong baselines on link prediction and node classification benchmarks.

\section{Acknowledgements}
The research is supported by the Guangdong Basic and Applied Basic Research Foundation (2020A1515010831), the Guangzhou Basic and Applied Basic Research Foundation
(202102020881), and CCF-AFSG Research Fund (20210002).

\bibliographystyle{ACM-Reference-Format}
\bibliography{main}

\clearpage
\appendix

\section{Proofs}\label{appendix:proofs}

The key ingredient of the proof of Proposition~\ref{prop:lower_bound} is the following lemma, which is a conditional version of \cite[Lemma 1]{xu2017information}:
\begin{lemma}\label{lem:mi}
    Consider three random variables $X$, $Y$ and $T$. Denote $P_{XY|T}$ as the joint distribution of $X$ and $Y$ given $T$, and $P_{X|T}$ and $P_{Y|T}$ as conditional distribution of $X$ and $Y$ given $T$, respectively. Let $q: \mathcal{X} \times \mathcal{Y} \times \mathcal{T} \mapsto \mathbb{R}$ be a function such that for any given $t \in \mathcal{T}$, $q$ is $\sigma$-subgaussian in the product of distributions $P_{X|T=t} \times P_{Y|T=t}$, i.e.,
    \begin{equation}\label{eq:subgaussian}
        \begin{aligned}
             & \log \mathbb{E}_{x \sim P_{X|T=t}, y \sim P_{Y|T=t}}\left[e^{\lambda (q(x, y, t))}\right] - \\ & \lambda \mathbb{E}_{x \sim P_{X|T=t}, y \sim P_{Y|T=t}}\left[q(x, y, t)\right] \le \dfrac{\lambda^2 \sigma^2}{2}, \forall \lambda.
        \end{aligned}
    \end{equation}
    Then we have the following lower bound on the conditional MI $I(X;Y|T)$:
    \begin{equation}
        \begin{aligned}
            \sigma \sqrt{2 I(X; Y|T)} \ge & | \mathbb{E}_T \mathbb{E}_{x, y \sim P_{XY|T=t}}\left[q(x, y, t)\right] \\ & - \mathbb{E}_{\bar{x}\sim P_{X|T=t}, \bar{y}\sim P_{Y|T=t}}\left[q(\bar{x}, \bar{y}, t)\right]|.
        \end{aligned}
    \end{equation}
\end{lemma}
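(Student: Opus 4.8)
\section*{Proof proposal for Lemma \ref{lem:mi}}

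The plan is to mirror the unconditional argument of \cite[Lemma 1]{xu2017information}, but to carry out the variational step pointwise in $t$ and only integrate against $P_T$ at the end. The two ingredients are the decomposition of conditional mutual information into an expectation of Kullback--Leibler divergences, namely $I(X;Y|T) = \mathbb{E}_{t \sim P_T}\big[D_{\mathrm{KL}}(P_{XY|T=t} \,\|\, P_{X|T=t} \times P_{Y|T=t})\big]$, together with the Donsker--Varadhan lower bound underlying \eqref{eq:dv}: for any fixed $t$ and any test function $g$, $\mathbb{E}_{P_{XY|T=t}}[g] - \log \mathbb{E}_{P_{X|T=t}\times P_{Y|T=t}}[e^g] \le D_{\mathrm{KL}}(P_{XY|T=t}\,\|\,P_{X|T=t}\times P_{Y|T=t})$. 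Throughout I write $\Delta(t) := \mathbb{E}_{x,y\sim P_{XY|T=t}}[q(x,y,t)] - \mathbb{E}_{\bar x\sim P_{X|T=t},\,\bar y\sim P_{Y|T=t}}[q(\bar x,\bar y,t)]$, so that the quantity on the right-hand side of the claim is exactly $|\mathbb{E}_T[\Delta(t)]|$.

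First I would fix $t$ and specialize the Donsker--Varadhan bound to $g = \lambda\, q(\cdot,\cdot,t)$, obtaining $\lambda\,\mathbb{E}_{P_{XY|T=t}}[q] - \log\mathbb{E}_{P_{X|T=t}\times P_{Y|T=t}}[e^{\lambda q}] \le D_{\mathrm{KL},t}$, where I abbreviate the conditional divergence as $D_{\mathrm{KL},t}$. Next I would invoke the $\sigma$-subgaussian hypothesis \eqref{eq:subgaussian}, which bounds the log-moment-generating function by $\log\mathbb{E}_{P_{X|T=t}\times P_{Y|T=t}}[e^{\lambda q}] \le \lambda\,\mathbb{E}_{P_{X|T=t}\times P_{Y|T=t}}[q] + \tfrac{\lambda^2\sigma^2}{2}$. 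Substituting this into the previous display and cancelling the common linear term collapses everything to a clean scalar inequality, $\lambda\,\Delta(t) - \tfrac{\lambda^2\sigma^2}{2} \le D_{\mathrm{KL},t}$, which holds for every real $\lambda$ and every $t$.

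The final step is to take the expectation over $t\sim P_T$ while holding a single $\lambda$ fixed; by linearity this yields $\lambda\,\mathbb{E}_T[\Delta(t)] - \tfrac{\lambda^2\sigma^2}{2} \le I(X;Y|T)$, valid for all $\lambda\in\mathbb{R}$. Optimizing the left-hand quadratic over $\lambda$ (its maximum is attained at $\lambda^\star = \mathbb{E}_T[\Delta]/\sigma^2$) gives $\tfrac{(\mathbb{E}_T[\Delta(t)])^2}{2\sigma^2}\le I(X;Y|T)$, and rearranging produces $\sigma\sqrt{2 I(X;Y|T)} \ge |\mathbb{E}_T[\Delta(t)]|$, which is the assertion. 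The absolute value is automatic because $\lambda$ ranges over both signs, so the optimized bound depends only on $\Delta^2$.

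I expect the only genuine subtlety to be the order of operations in this last paragraph: one must integrate over $T$ \emph{before} optimizing over $\lambda$, so that a common $\lambda$ serves all values of $t$; optimizing pointwise first would instead require a Jensen step ($(\mathbb{E}_T\Delta)^2 \le \mathbb{E}_T[\Delta^2]$) to reach the same conclusion, which is an equally valid but slightly longer route. The remaining points—measurability of $t\mapsto\Delta(t)$ and $t\mapsto D_{\mathrm{KL},t}$ and the validity of the disintegration of $I(X;Y|T)$—are standard and add only bookkeeping rather than real difficulty; the conditioning on $T$ contributes nothing beyond applying the unconditional estimate slicewise.
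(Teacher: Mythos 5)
Your proposal is correct and is essentially the paper's own proof: the paper likewise applies the Donsker--Varadhan representation of conditional mutual information with a single $\lambda$ common to all $t$, invokes the subgaussian hypothesis to absorb the log-moment-generating term, and concludes from the resulting quadratic inequality in $\lambda$, stated there as a nonnegative-parabola/discriminant argument that is algebraically identical to your explicit optimization at $\lambda^\star = \mathbb{E}_T[\Delta]/\sigma^2$. The ordering subtlety you flag---integrating over $T$ before optimizing over $\lambda$---is exactly what the paper does implicitly by keeping $\lambda$ fixed inside the expectation over $T$.
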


\begin{proof}[Proof of Proposition~\ref{prop:lower_bound}]
    We show the proposition with the feature dimension $d=1$, and extending to general finite feature dimensions is straightforward. By the conditional version of data processing inequality \cite{polyanskiy2014lecture}, for any \textbf{subgraph encoding map} $\mathcal{M}_k$ that maps  $k$-hop subgraphs of any node $v$ into real vectors, we have:
    \begin{align}
        I(U;V | T) \ge I(\mathcal{M}_k(\mathcal{G}^k(u)); \mathcal{M}_k(\mathcal{G}^k(v)) | T).
    \end{align}
    For notational simplicity we let $m^k(v) = \mathcal{M}_k(\mathcal{G}^k(v))$ for $\forall v \in \mathcal{V}$. Now it suffices to bound $I(U;V | T)$ under a certain encoding map, which we construct as follows: Conditional on the topological information of the underlying graph $\mathcal{G}$ with size $N = |\mathcal{G}|$, we associate each node $v$ with an index $\text{id}(v)$ that corresponds to its order in the topological traversal of the graph. Then $m^k(v)$ is a sparse vector of length $N$, with nonzero positions defined as:
    \begin{align}
        m^k(v)[\text{id}(u)] = x(u), \forall u \in \mathcal{G}^k(v),
    \end{align}
    where we use $x[k]$ to denote the $k$-th element of a vector $x$. To bound $I(m^k(U);m^k(V)|T)$ from below, we use lemma \ref{lem:mi} via choosing $q$ to be the euclidean inner product:
    \begin{equation}\label{eq:inner_product}
        \begin{aligned}
            q(m^k(u), m^k(v), t) & = \langle m^k(u), m^k(v)\rangle    \\
                                 & = \sum_{i=1}^N m^k(u)[i] m^k(v)[i]
        \end{aligned}
    \end{equation}
    First, we assess the subgaussian property of $q$. Note that each summand in Eq.\eqref{eq:inner_product} is a product of two independent zero-mean random variables with range $[-1, 1]$, it follows from standard results of subgaussian random variables \cite{wainwright2019high} that each summand is subgaussian with coefficient $1$. Since Eq.\eqref{eq:inner_product} has at most $N_k$ nonzero summands. It follows by the property of subgaussian random variables implies that $q$ is subgaussian with $\sigma \le \sqrt{N_k}, \forall t \in \mathcal{T}$.
    Next, we investigate the expectation of $q$ under different distributions. Note that given the topological structure, the randomness is only with respect to the generating distribution of node features. It follows from the independence of node features and the zero-mean assumption, that
    \begin{align}
        \mathbb{E}_{\bar{x}\sim P_{X|T=t}, \bar{y}\sim P_{Y|T=t}}q(\bar{x}, \bar{y}, t) = 0, \forall t \in \mathcal{T}.
    \end{align}
    Similarly, we have
    \begin{align}
        \mathbb{E}_{x, y \sim P_{XY|T}}f(x, y, t) = \gamma N^k_{uv}, \forall t \in \mathcal{T},
    \end{align}
    since the rest $N - N^k_{uv}$ pairs are orthogonal. Applying lemma \ref{lem:mi} we get the desired result:
    \begin{align}\label{eqn: infobound}
        I(U;V| T) \ge I(m^k(U); m^k(V) | T) \ge \frac{\left(\mathbb{E}[N^k_{uv}]\right)^2}{N_k}\gamma^2
    \end{align}
\end{proof}

\begin{proof}[Proof of lemma \ref{lem:mi}]
    The proof goes along similar arguments in \cite{xu2017information}, with slight modifications in the conditional case. First by the Donsker-Varadhan variational representation of conditional mutual information:
    \begin{equation}
        \begin{aligned}
            I(X; Y|T) \ge & \mathbb{E}_T [\mathbb{E}_{x, y \sim XY|T=t}\lambda q(x, y, t) \\ & - \log \mathbb{E}_{\bar{x} \sim P_{X|T=t}, \bar{y} \sim P_{Y|T=t}}\left(e^{\lambda q(\bar{x}, \bar{y}, t)}\right)]
        \end{aligned}
    \end{equation}
    By the required subgaussian assumption we have
    \begin{equation}\label{eq:parabola}
        \begin{aligned}
            I(X; Y|T) \ge & \mathbb{E}_T [\lambda\mathbb{E}_{x, y \sim XY|T=t}q(x, y, t) \\ & -  \lambda \mathbb{E}_{\bar{x} \sim P_{X|T=t}, \bar{y} \sim P_{Y|T=t}} q(\bar{x}, \bar{y}, t)] - \dfrac{\lambda^2\sigma^2}{2}
        \end{aligned}
    \end{equation}
    Viewing the equation \eqref{eq:parabola} as a non-negative parabola and setting its discriminant to be nonpositive yield the result.
\end{proof}

\subsection{On Approximating $I(U;V)$}
Recall that $z_v = f_\theta (\mathcal{G})[v]$, for clarification of parameter dependence, we write $z_v := f_\theta(v)$ without further understandings. Minimization of the GAE loss Eq.\eqref{eq:gae} is therefore stated as:
\begin{align}\label{eq:prob}
    \omega^*, \theta^* \in \underset{\omega \in \Omega, \theta \in \Theta}{\arg\min} \mathcal{L}_{\text{GAEs}}(\omega, \theta)
\end{align}
First we consider the ideal situation that the parameter space $\Omega \times \Theta$ is sufficiently large so that there exists $\theta_0 \in \Theta$ and $\omega_0 \in \Omega$ such that
\begin{align}\label{eq:exact_approx}
    h_{\omega_0}\left(f_{\theta_0}(u), f_{\theta_0}(v)\right) = \log \dfrac{p(u, v)}{p(u)p(v)}, \forall u, v
\end{align}
Now we use the fact that the pointwise dependency $\log \dfrac{p(u, v)}{p(u)p(v)}$ \cite{tsai2020neural} is both the optimizer of the Donsker-Varadhan variational objective \cite{polyanskiy2014lecture} and the objective Eq.\eqref{eq:gae_pop} \cite{tsai2020neural}. Standard results on M-estimation \cite[Theorem 5.7]{van2000asymptotic} suggest that, if we are allowed to sample from the generating distribution of $\mathcal{G}$ and get an arbitrary number of samples (we assume the satisfaction of empirical process conditions as in the discussion of \cite{pmlr-v80-belghazi18a}), we have $w^* \overset{p}{\rightarrow}\omega_0$ and $\theta^* \overset{p}{\rightarrow}\theta_0$ with the number of samples goes to infinity. In practice, the actual parameterization may not achieve the ideal property \eqref{eq:exact_approx} using GNN as encoders: standard results in the expressivity of locally unordered message passing \cite{garg2020generalization} indicates that there exist certain pairs of subgraphs that cannot be distinguished by most of the off-shelf GNN models. Consequently the condition \eqref{eq:exact_approx} is violated. Nonetheless, we may view the problem \eqref{eq:prob} as being composed of two subproblems:
\begin{align}
    \omega^* = \omega^*(\theta^*) & \in \underset{\omega \in \Omega}{\arg\min} \mathcal{L}_{\text{GAEs}}(\omega, \theta^*),                       \\
    \theta^*                      & \in \underset{\theta \in \Theta}{\arg\min} \min_{\omega \in \Omega}\mathcal{L}_{\text{GAEs}}(\omega, \theta),
\end{align}
with the inner problem satisfies that for any given $\theta \in \Theta$,
\begin{align}
    \mathcal{I}_{h*(\theta)}(f_\theta(U); f_\theta(V)) = I(f_\theta(U); f_\theta(V)).
\end{align}

Here the function approximation is possible if $\omega$ is parameterized as an MLP with sufficient capacity.

\subsection{Discussions on Proposition \ref{prop:lower_bound}}
\label{appendix:discussions}
There appear some issues regarding the statement of Proposition~\ref{prop:lower_bound}:

\textbf{On the downstream target $T$.} In the original statement, $T$ is set to be the topological information of the underlying graph. This appears to be somewhat necessary since graph learning shall exploit topological structure. A possible relaxation could be to let $T$ be \emph{local topological information} considering only $\mathcal{G}^k(v)$ for any $v \in \mathcal{V}$. Such relaxation does not affect the result of Proposition~\ref{prop:lower_bound} much since we just need to restrict the analysis to certain subgraphs. The situation becomes more complicated if we allow $T$ to be a hybrid of feature and topological information, i.e., there is some additional randomness not captured by $T$ regarding both feature and topology. We left corresponding explorations to future research.

\textbf{On the independence between $X$ and $T$.} The independence assumption might not be appropriate for certain types of generating mechanisms. For example, in many popular probabilistic models of graph-like random graph models \cite{goldenberg2010survey} and Markov random fields \cite{wainwright2008graphical}, if we treat node features as random samples from such graph models, conditional on the graph structure, node features are correlated according to whether a path exists between the underlying nodes. Incorporating dependence in proposition \ref{prop:lower_bound} will result in trivial bounds in the worst case: via investigating the proof, since sums of correlated subgaussian random variables are subgaussian with a potentially much larger coefficient ($N_k$ instead of $\sqrt{N_k}$), resulting in a lower bound no greater than the variance of a single node. This trivial lower bound, however, cannot be improved under general dependence of node features. Consider the following extreme case where the underlying graph is complete, and adjacent nodes are perfectly correlated, i.e., all node features are the same with probability one. In this extreme case, the randomness of the entire graph boils down to the randomness of any single node in the graph, and the mutual information of any two subgraphs will thus not exceeds the entropy of a single node, with is proportional to the variance of the node if the node feature is a zero-mean random variable. A more careful examination of $I(U;V|T)$ under some specified dependence structure between nodes is therefore valuable, and we left it to future studies.

\textbf{On the negative edge sampling procedure.}
To strictly adhere to the infomax paradigm, we need the negative sample pairs to be independently drawn from the marginals. The original GAE objective Eq.\eqref{eq:gae} is therefore a biased estimate of the population objective Eq.\eqref{eq:gae_pop} since the negative samples are drawn from unconnected node pairs. However, the estimation bias does not affect the analysis: denote the true negative sampling distributions as $P^-_{UV}$, then following the procedure in \cite{tsai2020neural}, we can show the following optimizer:
\begin{equation}
    \begin{aligned}
        \widetilde{h}_* \in \underset{h \in \mathcal{H}}{\arg\min} \  & [- \mathbb{E}_{u, v \sim P_{UV}}\log h(u, v) \\ & - \mathbb{E}_{u^\prime, v^\prime \sim P^-_{UV}}\log(1-h(u, v))]
    \end{aligned}
\end{equation}
satisfies that
\begin{equation}
    \begin{aligned}
        D_{\text{KL}}(P_{UV} || P^-_{UV}) & = \mathbb{E}_{u, v \sim P_{UV}} \widetilde{h}_*(u, v) \\ & - \log \mathbb{E}_{u, v \sim P^-_{UV}}(e^{\widetilde{h}_*(u, v)}).
    \end{aligned}
\end{equation}
Since the Kullback-Leibler divergence enjoys the chain rule \cite{polyanskiy2014lecture}, our discussion on GAE follows the modeling paradigm under the true negative sampling process as well, with some modification over the definition of MI.

\section{Algorithm}\label{appendix:algo}
To help better understand the proposed framework, we provide the detailed algorithm and PyTorch-style pseudocode for training MaskGAE in Algorithm~\ref{algo:maskgae} and Algorithm~\ref{algo:pseudo_maskgae}, respectively.

\begin{algorithm}[h]
    \caption{Masked Graph Autoencoder (MaskGAE)}
    \label{algo:maskgae}
    \begin{algorithmic}[0]
        \Require Graph $\mathcal{G}=(\mathcal{V}, \mathcal{E})$, encoder $f_\theta(\cdot)$, structure decoder $h_\omega (\cdot)$, degree decoder $g_\phi(\cdot)$, masking strategy $\mathcal{T} \in \{\mathcal{T}_\text{edge}, \mathcal{T}_\text{path}\}$, hyperparameter $\alpha$;
        \Ensure Learned encoder $f_\theta(\cdot)$;
    \end{algorithmic}
    \begin{algorithmic}[1]
        \While{\emph{not converged}};
        \State $\mathcal{G}_{\text{mask}} \leftarrow \mathcal{T}(\mathcal{G})$;
        \State $\mathcal{G}_{\text{vis}} \leftarrow \mathcal{G}-\mathcal{G}_{\text{mask}}$;
        \State $\mathbf{Z} \leftarrow f(\mathcal{G}_{\text{vis}})$;
        \State Calculate $\mathcal{L}_\text{GAEs}$ over $\mathcal{G}_{\text{mask}}$ according to Eq.\eqref{eq:gae};
        \State Calculate $\mathcal{L}_{\text{deg}}$ over $\mathcal{G}_{\text{mask}}$ according to Eq.\eqref{eq:deg};
        \State Calculate loss function $\mathcal{L}\leftarrow \mathcal{L}_{\text{GAEs}} + \alpha\mathcal{L}_{\text{deg}}$;
        \State Update $\theta, \omega, \phi$ by gradient descent;
        \EndWhile;\\
        \Return $f_\theta(\cdot)$;
    \end{algorithmic}
\end{algorithm}

\begin{algorithm}[H]
    \caption{PyTorch-style pseudocode for training MaskGAE}
    \label{algo:pseudo_maskgae}
    \begin{algorithmic}[0]
        \State \PyComment{f:\ encoder network}
        \State \PyComment{h:\ structure decoder network}
        \State \PyComment{g:\ degree decoder network}
        \State \PyComment{alpha:\ trade-off}
        \State \PyComment{edge\_index:\ visible edges of input graph}
        \State \PyComment{x:\ node features}
        \State
        \State \PyCode{masked\_edges, vis\_edges = \textcolor{pink}{masking}(edge\_index)}
        \State \PyCode{num\_edges = masked\_edges.size(1)}
        \State \PyCode{z = f(x, vis\_edges) \PyComment{In vanilla GAE, z = f(x, edge\_index)}}
        \State \PyCode{neg\_edges = \textcolor{pink}{negative\_sampling}(num\_samples=num\_edges)}
        \State \PyComment{Calculate reconstruction loss}
        \State \PyCode{loss\_pos = F.\textcolor{pink}{crossentropy}(h(z, masked\_edges), torch.ones(num\_edges))}
        \State \PyCode{loss\_neg = F.\textcolor{pink}{crossentropy}(h(z, neg\_edges),   torch.zeros(num\_edges))}
        \State \PyCode{loss\_gaes = loss\_pos + loss\_neg}
        \State \PyComment{Calculate regression loss as a regularizer}
        \State \PyCode{loss\_deg = F.\textcolor{pink}{mse\_loss}(g(z), \textcolor{pink}{degree}(masked\_edges))}
        \State \PyComment{Total loss for optimization}
        \State \PyCode{loss = loss\_gaes + alpha * loss\_deg}
        \State \PyComment{Backward propagation}
        \State \PyCode{loss.\textcolor{pink}{backward}()}
    \end{algorithmic}
\end{algorithm}

\section{Further Discussions}

\begin{table*}[t]
    \centering
    \caption{Comparison of MaskGAE and GraphMAE.}\label{tab:comparison}
    {\begin{tabular}{l|cccc}
            \toprule
            \textbf{Method}          & \textbf{Reconstruct feature} & \textbf{Reconstruct edge} & \textbf{Reconstruct degree} & \textbf{Theoretical contribution} \\
            \midrule
            GraphMAE~\cite{graphmae} & \cmark                       & \xmark                    & \xmark                      & \xmark                            \\
            MaskGAE (ours)           & \xmark                       & \cmark                    & \cmark                      & \cmark                            \\
            \bottomrule
        \end{tabular}
    }
\end{table*}

\paragraph{MaskGAE versus GraphMAE}
GraphMAE and MaskGAE share similar intuitions on the self-supervised learning scheme. Both of them are generative methods and adopt the masking strategy on GAE to improve its performance. However, they are technically different in some aspects (see also Table~\ref{tab:comparison}): (i) GraphMAE focuses on \emph{feature reconstruction} while MaskGAE focuses on \emph{structure reconstruction}, including edge and degree reconstructions. (ii) GraphMAE only presents empirical results, without in-depth analysis or theoretical justification on the benefits of the masked autoencoding scheme on graphs. By contrast, our work digs into the hidden reasoning behind graph autoencoders and graph contrastive learning, and offers insights on \emph{why masking could benefit GAEs}. The theoretical analysis between GAEs and contrastive learning is our main contribution and is non-trivial.

\paragraph{Limitations of the work.} Despite the theoretical grounds and the promising experimental justifications,
our work might potentially suffer from some limitations: (i) As like existing augmentation-based contrastive methods, performing masking (a kind of augmentation) on the graph structure would hurt the semantic meaning of some graphs, such as biochemical molecules~\cite{rong2020self,DBLP:conf/ijcai/ZhaoLHLZ21}. (ii) MaskGAE is mainly based on the homophily assumption, a basic assumption in the literature of graph-based representation learning. However, such an assumption may not always hold in heterophilic graphs, where the labels of linked nodes are likely to differ.

\paragraph{Potential negative societal impact.}
Self-supervised learning on graphs is an important technique to make use of rich unlabeled data and can be applied to many fields such as financial networks~\cite{wang2021review} and molecular biology~\cite{rong2020self,DBLP:conf/ijcai/ZhaoLHLZ21}. Our work presents a self-supervised framework for learning effective and generalizable node representations. These representations enable learning robust classifiers with limited data and thus facilitate applications in many domains where annotations are expensive or difficult to collect. Just like any machine learning algorithm can be used for good it can also be used for harm. For example, an immediate application of this work is to conveniently construct a pretrained dataset by randomly crawling data from elsewhere. This is economic and beneficial for research purposes, but would potentially risk privacy and license issues in many security-sensitive situations. This would be the potential concerns and negative societal impact of our work. The line of self-supervised learning works, including this paper, are not immune to such misuse. Currently, we have no solution for such a general problem but we are aware that this needs to be addressed in future work.

\end{document}